%% file: main.tex
\documentclass[sigconf,nonacm]{aamas} 

\usepackage{balance} 
\input{auxiliary/packages}
\input{auxiliary/macros}
\newtheorem{proof}{Proof}
\newtheorem{definition}{Definition}
\newtheorem{remark}{Remark}
\newtheorem{theorem}{Theorem}

\newtheorem{lemma}{Lemma}
\newtheorem{corollary}{Corollary}

\setcopyright{ifaamas}
\acmConference[AAMAS '26]{Proc.\@ of the 25th International Conference
on Autonomous Agents and Multiagent Systems (AAMAS 2026)}{May 25 -- 29, 2026}
{Paphos, Cyprus}{C.~Amato, L.~Dennis, V.~Mascardi, J.~Thangarajah (eds.)}
\copyrightyear{2026}
\acmYear{2026}
\acmDOI{}
\acmPrice{}
\acmISBN{}

\acmSubmissionID{223}

\title[]{
Robust Probabilistic Shielding
\\for Safe Offline Reinforcement Learning
}

\author{Maris~F.~L.~Galesloot}
\affiliation{
  \institution{Radboud University}
  \city{Nijmegen}
  \country{The Netherlands}}
\email{maris.galesloot@ru.nl}

\author{Thomas Rhemrev}
\affiliation{
  \institution{Radboud University}
  \city{Nijmegen}
  \country{The Netherlands}}
\email{thomas.rhemrev@ru.nl}

\author{Nils Jansen}
\affiliation{
  \institution{\mbox{Ruhr University\,\&\,Radboud University}}
  \city{Bochum}
  \country{Germany}}
\email{n.jansen@rub.de}

\begin{abstract}
\input{sections/00-abstract}
\end{abstract}

\keywords{Offline Reinforcement Learning, Probabilistic Shielding, Safe Reinforcement Learning, Safe Policy Improvement, Robust MDPs}

\newcommand{\BibTeX}{\rm B\kern-.05em{\sc i\kern-.025em b}\kern-.08em\TeX}

\makeatletter
\newcommand{\settotalpages}[1]{%
  \@namedef{r@TotPages}{{#1}{}}%
}
\makeatother

\begin{document}


\pagestyle{fancy}
\fancyhead{}


\maketitle 


\input{sections/01-introduction}

\input{sections/02-preliminaries}

\input{sections/03-shielding}
\input{sections/04-experiments}
\input{sections/05_discussion_conclusion}



\begin{acks}
This work has been partially funded by the ERC Starting Grant DEUCE (101077178) and the ORLEANS project from the Interdisciplinary Research Platform of Radboud University.
\end{acks}



\bibliographystyle{ACM-Reference-Format} 
\bibliography{references}


\appendix
\input{sections/A1-complete_graphs}

\end{document}

%% file: auxiliary/packages.tex
\usepackage{comment}

\usepackage{amsmath}

\IfPackageLoadedTF{amssymb}{}{\usepackage{amssymb}}

\usepackage{amsthm}
\usepackage{xargs}
\usepackage{tikz}
\usepackage{nicefrac}
\usepackage[textwidth=33mm,
	backgroundcolor=white,
    tickmarkheight=2pt,
	textsize=tiny]{todonotes}
\usepackage{xspace}
\usepackage{booktabs}
\usepackage{enumitem}
\usepackage[capitalize]{cleveref}
\usepackage{subcaption}
\usepackage{mdframed}
\IfPackageLoadedTF{natbib}{}{\usepackage[numbers]{natbib}}
\usetikzlibrary{positioning, shapes.geometric, arrows.meta, fit}

%% file: auxiliary/macros.tex
\newcommand{\tool}[1]{{\textsc{#1}}}
\newcommand{\storm}{\tool{Storm}\xspace}
\newcommand{\prism}{\tool{Prism}\xspace}

\DeclareMathOperator*{\argmax}{argmax}

\DeclareMathOperator{\supp}{supp}

\theoremstyle{definition} 

\newtheorem{assumption}{Assumption}

\newcommand{\E}{\mathbb{E}}

\newcommand{\Rmax}{R_{\text{max}}}
\newcommand{\T}[0]{{T}}
\newcommand{\R}{R}
\newcommand{\distr}[1]{\Delta({#1})}
\newcommand{\mdp}{{{M}}}
\newcommand{\imdp}{\mdp_{\intervals}}

\newcommand{\intervals}{\mathbb{I}}

\newcommand{\safestates}{S_T}
\newcommand{\unsafestates}{S_U}
\newcommand{\dir}{\text{Dir}}
\newcommand{\maxprob}{\mathbb{P}^{\hat{\imdp}, \Pi,\varphi}_{\text{maxmin}}}

\newcommand{\dataset}{\mathcal{D}}
\newcommand{\boots}{\mathcal{B}}
\newcommand{\nwedge}{N_{\wedge}}
\newcommand{\var}{\operatorname{Var}}
\newcommand{\shield}{%
    \tikz[baseline]{\draw (0,1.0ex) -- (0,0.0ex) arc [radius=0.5ex, start angle=-180, end angle=0] -- (1.0ex,1.0ex) -- cycle;}%
}
\newcommand{\threshold}{{\theta}}
\newcommand{\buffer}{\kappa}
\newcommand{\errorrate}{\eta}
\newcommand{\probmargin}{\xi}
\newcommand{\uncertaintyweight}{\nu}
\newcommand{\confspi}{\delta_{\rho}}
\newcommand{\confshield}{{\delta_{\intervals}}}
\newcommand{\confimdp}{{\delta_{\intervals}}}

\newcommand{\shieldspibb}{SPIBB$_{\shield}$\xspace}

\newcommand{\policy}{\pi}
\newcommand{\baseline}{\policy^b}
\newcommand{\heuristic}{\policy^h}
\newcommand{\improved}{\policy^{\iota}}
\newcommand{\shieldedbaseline}{\shielded{\baseline}}
\newcommand{\shieldedimproved}{\shielded{\improved}}
\newcommand{\shielded}[1]{#1_{\shield}}

\newcommand{\mlmdp}{MLE-MDP\xspace}
\newcommand{\error}{e_{\dataset}}
\newcommand{\admissibleset}{\Xi^{\error}_{\hat\mdp}}

\newcommand{\powerset}{\mathcal{P}}

\newcommand{\performance}[1]{\rho(#1)}

\newcommand{\rrV}{{\mathbb{V}}_{\hat{\imdp}}}
\newcommand{\rV}{\mathbb{V}_{\mdp^*}}
\newcommand{\rrQ}{{\mathbb{Q}}_{\hat{\imdp}}}
\newcommand{\rQ}{\mathbb{Q}_{\mdp^*}}

\newlist{questionenum}{enumerate}{1}
\setlist[questionenum]{label=\textbf{(Q\arabic*)}, ref=(Q\arabic*)}
\Crefname{question}{Question}{Questions}
\crefname{question}{question}{questions}
\crefalias{questionenumi}{question}

%% file: sections/00-abstract.tex
In \emph{offline} reinforcement learning (RL), we learn policies from fixed datasets without environment interaction.
The major challenges are to provide guarantees on the (1) performance and (2) safety of the resulting policy.
A technique called \emph{safe policy improvement}~(SPI) provides a \emph{performance guarantee}: with high probability, the new policy outperforms a given \emph{baseline policy}, which is assumed to be safe.
Orthogonally, in the context of safe RL, a \emph{shield} provides a \emph{safety guarantee} by restricting the action space to those actions that are provably safe with respect to a given \emph{safety-relevant model}.
We integrate these paradigms by extending shielding to offline RL, relying solely on the available dataset and knowledge of safe and unsafe states.
Then, we shield the policy improvement steps, guaranteeing, with high probability, a safe policy.
Experimental results demonstrate that shielded SPI outperforms its unshielded counterpart, improving both average and worst-case performance, particularly in low-data regimes.

%% file: sections/01-introduction.tex
\section{Introduction}
In an \emph{online} reinforcement learning (RL) setting, an agent learns to make decisions by interacting with an environment, modeled as a Markov decision process (MDP)~\citep{Bellman1957AMD,DBLP:books/wi/Puterman94}, to maximize cumulative discounted rewards. 
In settings where interaction with the environment is infeasible, e.g., when these interactions are costly, dangerous, or otherwise impractical for real-world applications, \emph{offline} RL~\citep{DBLP:journals/corr/abs-2005-01643} (also referred to as \emph{batch} RL~\citep{DBLP:conf/ecml/ErnstGW03}) provides a feasible alternative.
In offline RL, the agent learns a policy solely by analyzing a dataset of transitions in the MDP that was previously collected by a \emph{baseline policy}. 
To allow for deployment in safety-critical scenarios, a challenge of offline RL is to provide guarantees on the (1)~\emph{performance} and (2)~\emph{safety} of the resulting policy.

\begin{figure}[t]
    \centering
    \includegraphics[width=0.9\linewidth]{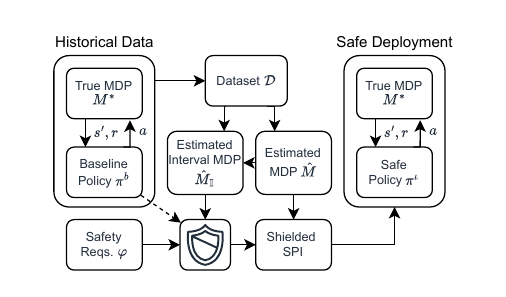}
    \caption{The shielded SPI methodology for safe offline RL.}
    \label{fig:placeholder}
\end{figure}

For guaranteeing performance, a promising direction is so-called \emph{safe policy improvement}~(SPI)~\citep{DBLP:conf/icml/PirottaRPC13,DBLP:conf/nips/GhavamzadehPC16,DBLP:conf/icaart/SchollDOU22a}. 
In offline RL, the SPI problem is to find a policy based on a given dataset that improves over a given (or estimated~\citep{DBLP:conf/atal/SimaoLC20}) baseline policy with high probability~\citep{DBLP:conf/icml/LarocheTC19}.
Unfortunately, the number of times each state-action pair must occur in the dataset is prohibitively large to guarantee a tight improvement with high probability.
A well-known algorithm that addresses the SPI problem in offline RL is SPI \emph{with baseline bootstrapping}~(SPIBB)~\citep{DBLP:conf/icml/LarocheTC19}.
In SPIBB, the improvement requirement is relaxed by falling back to the baseline policy for parts of the MDP with low coverage in the dataset.
SPIBB rests on the critical assumption that the fallback baseline policy must be safe, since it is not always improved. 
Furthermore, the notion of ``safety'' in SPIBB merely provides approximate guarantees of improvement over an assumed-to-be-safe policy.
No lower bound or guarantees on the actual safety of the resulting policies are guaranteed.
Crucially, in many real-world scenarios, there are \emph{safety requirements} to which the policies must adhere, and we must have a \emph{safety guarantee} that these requirements are satisfied.

In \emph{safe} RL, \emph{shielding}~\citep{DBLP:conf/concur/0001KJSB20} is a technique to ensure that an agent does not take actions that would lead to undesirable or unsafe states, providing a \emph{safety guarantee}. 
For instance, a \emph{reach-avoid} specification governs the probability of reaching safe states while avoiding unsafe states~\citep{model_checking_bible}.
Then, a \emph{probabilistic shield} restricts the actions that exceed a certain threshold probability of violating such a specification~\citep{DBLP:conf/concur/0001KJSB20,court_probabilistic_2025}.
Shielding is commonly applied in online RL.
The major drawback is that the computation of a shield requires the availability of a model in the form of a \emph{safety-relevant} MDP, encompassing all the necessary information for safe behavior~\citep{DBLP:conf/concur/0001KJSB20,court_probabilistic_2025,DBLP:conf/aaai/AlshiekhBEKNT18}.
All safety guarantees of shielding rely on this safety-relevant MDP, which is generally unrealistic to assume.
To the best of our knowledge, probabilistic shielding has not been studied in an offline RL setting where only a dataset of limited size is available.

\paragraph{Contributions.}
In this paper, we introduce probabilistic shields to offline RL.
Specifically, we integrate probabilistic shielding into SPI.
We compute a shield from a given dataset, assuming knowledge of safe and unsafe states.
To account for the potential imperfection of the available dataset, we base the shield computation on the construction of a so-called \emph{robust MDP}~\citep{DBLP:journals/mor/WiesemannKR13}.
These models capture data uncertainty by featuring imprecise transition probabilities via \emph{uncertainty sets}~\citep{DBLP:journals/ior/NilimG05,DBLP:journals/mor/Iyengar05}.
We construct the robust model as an interval MDP that contains the true data-generating MDP with high probability.
Then, we provide, with high probability, a robust bound on the reach-avoid probabilities of the true MDP.
Using this result, we shield the SPI process to ensure it adheres to the safety requirements with high probability while still providing probabilistic guarantees on improvement within the shielded MDP due to SPI.
We illustrate our shielding methodology for SPI in~\Cref{fig:placeholder}.

We summarize our contributions as follows:
\begin{enumerate} 
    \item \textbf{Shields for safe offline RL:} We introduce a novel approach for safe offline RL by computing probabilistic shields on an interval MDP constructed from a given dataset. 
    \item \textbf{Shielding from a dataset}~(\cref{sec:shielding}): 
    We construct an interval MDP from the dataset and define a shield that restricts actions by thresholding worst-case reach-avoid probabilities, providing probabilistic safety guarantees.
    \item \textbf{Integration with SPI(BB)}~(\cref{sec:shield:spi}): We incorporate the shield into SPI, focusing specifically on SPIBB, ensuring that the resulting policies adhere to the shield.
\end{enumerate}
Experiments on four benchmarks evaluate the effectiveness of shielding in offline RL, specifically in the context of SPI, in terms of both safety and performance.
We show that, in most cases, shielded SPI (1) prevents the computation of unsafe policies and (2) improves overall and worst-case performance over standard SPI.

%% file: sections/02-preliminaries.tex
\section{Background}
\newcommand{\probdist}{\mu}
First, we introduce basic concepts and notation.
A \emph{probability distribution} over a finite set $X$ is a function $\probdist \colon X \to [0,1]$ such that $\sum_{x\in X} \probdist(x) = 1$.
The \emph{support} of a distribution $\probdist$ is $\supp(\probdist) = \{x \in X \mid \probdist(x) > 0\}$.
Let $\Delta(X) 
$ 
be the set of all probability distributions over $X$, and let $\powerset(X)$ be the \emph{powerset} of $X$.
For any predicate $\phi$, \emph{Iverson brackets}, denoted $[\phi]$, return $1$ if $\phi$ is true and $0$ otherwise.
For $f \colon X \to \Delta(Y)$ and similar, we write $f(y \mid x)$ to denote $f(x)(y)$.

\subsection{Markov decision processes}
A \emph{Markov decision process}~(MDP) models sequential decision-making problems in stochastic environments.
\begin{definition}
An MDP is a tuple~$\mdp = ( S, s_0, A, \T, \R, \gamma)$, where $S$ is the set of \emph{states} with $s_0 \in S$ the \emph{initial state}\footnote{We present a single initial state, but it generalizes to an initial state distribution.}, $A$ is the set of \emph{actions} from which $A(s)\subseteq A$ are \emph{available} at $s {\in} S$ , $\T \colon S \times A \hookrightarrow (S \to [0,1])$ is the partial \emph{transition function}%
, $\R \colon S \times A\to [-\Rmax, \Rmax]$ is the \emph{reward function}, with $\Rmax \in\mathbb{R}$, and $\gamma \in [0, 1)$ is the \emph{discount factor}. 
\end{definition}

A stochastic and memoryless \emph{policy}~$\pi \colon S \to \distr{A}$ selects the actions to execute.
The set of all stochastic and memoryless policies is $\Pi$. 
The \emph{value} of a policy $\pi\in\Pi$ from $s \in S$ is the expected infinite-horizon sum of discounted rewards:
\begin{equation}
    V_{\mdp}^\pi(s) = \E_{M,\pi}\left[\sum_{t=0}^{\infty}\gamma^t\R(s_t,a_t)\mid s_0=s\right],
\end{equation} 
where the expectation $ \E_{M,\pi}\left[\cdot\right]$ is taken over the transition probabilities of the Markov chain induced by following the policy $\pi$ in the MDP $M$~\citep{DBLP:books/wi/Puterman94}.
Its value follows from the value function $V_{\mdp}^\pi \colon S \to \mathbb{R}$, which is a fixed point of the Bellman equation~\citep{Bellman1957AMD}:
\(
    V_{\mdp}^\pi(s) = \sum_{a\in A} \pi(a\mid s) 
    \left[  \R(s,a)+\gamma\sum_{s'\in S} \T(s'\mid s,a)  V_{\mdp}^\pi(s')\right].
\)
Then, the \emph{performance} $\performance{\pi,\mdp} = V_{\mdp}^\pi(s_0)$ of a policy $\pi$ on MDP~$\mdp$ is the value of the initial state $s_0$. 
For a set of policies $\Pi'\subseteq\Pi$, a \emph{$\Pi'$-optimal policy} $\pi^* \in \Pi'$ maximizes the performance: 
\(\pi^* \in \argmax_{\pi\in\Pi'} \rho(\pi,\mdp).\)
Analogously to the value $V_{\mdp}^\pi$, the action-value $Q_{\mdp}^\pi \colon S \times A \to \mathbb{R}$ represents the expected value of taking action $a\in A$ in state $s\in S$.
\paragraph{Policy iteration.}
In this paper, we compute policies using variants of \emph{policy iteration}~\citep{DBLP:books/wi/Puterman94}, an algorithm for MDPs consisting of two steps: \emph{policy evaluation} and \emph{policy improvement}. 
In the evaluation step, we compute the action-value function $Q^\pi$. 
In the improvement step, we use $Q^\pi$ to update the policy $\pi$ greedily.
These steps are repeated iteratively, updating both $Q'^\pi\gets Q^\pi$ and $\pi'\gets \pi$, until $\pi'=\pi$ and convergence to an optimal policy is achieved~\citep{DBLP:books/wi/Puterman94}.

\paragraph{Datasets.}
In RL, we may find an optimal policy by learning the transition and reward function of the MDP. 
In this paper, we assume, without loss of generality, that the reward function is known.
In offline RL, we are restricted to using previously gathered data. 
A \emph{trajectory} is a finite sequence of states $\tau_H = (s_0,a_0,s_{1}, a_{1}, \ldots, s_H) \in (S \times A)^* \times S$ where $s_0$ is the initial state and where $\T(s_{t+1}\mid s_{t}, a_t) > 0$ for $0 \leq t < H$, and $t, H \in \mathbb{N}$.
A \emph{dataset} $\mathcal{D}$ consists of a finite number of trajectories.
Throughout the paper, 
we denote by $N_\dataset(s,a,s')$ and $N_\dataset(s,a)$ the number of occurrences of the transition triple $(s,a,s')$ and the state-action pair~$(s,a)$, respectively, in the given dataset of trajectories $\dataset$.

\subsection{Safe policy improvement}
\label{sec:spi}
The safe policy improvement (SPI) problem~\citep{DBLP:conf/aaai/ThomasTG15} in offline RL is to find a policy that improves over a given baseline policy with high probability. 
Specifically, from a dataset $\dataset$ and a baseline policy $\baseline$, SPI algorithms aim for a new policy $\improved$ that improves upon the baseline policy $\baseline$ on the true MDP $M^*$ with high probability:
\begin{equation}
    \label{eq:spi}
    \Pr\left( 
\rho(\improved, \mdp^*) 
\geq 
\rho(\baseline, \mdp^*) 
- \zeta \right) \geq 1 - \confspi,
\end{equation}
where $\zeta \in \mathbb{R}$ is the admissible error of the improvement approximation and $\confspi \in [0,1]$ signifies the required confidence level.
Note that $\zeta$ is a slack parameter, i.e., improvement is not necessarily achieved.
To find a solution to \cref{eq:spi}, we define the \emph{maximum likelihood} MDP~(\mlmdp), a tuple $\hat{\mdp} =  (S, A, \hat{\T}, \R, \gamma)$, with a transition function $\hat{\T}$ estimated via maximum likelihood as:
\begin{equation}
   \hat{\T}(s' \mid s,a) = [N_{\dataset}(s,a) > 0]\nicefrac{N_\dataset(s,a,s')}{N_{\dataset}(s,a)}.
\end{equation}
Note that in the \mlmdp, the uncertainty of the point estimate is not taken into account.
Given an \mlmdp and an \emph{error function} $\error \colon S \times A \to \mathbb{R}$, we can define an \emph{admissible set} of MDPs~\citep{DBLP:conf/nips/GhavamzadehPC16} that have a transition function with a bounded $\ell_1$ distance $||\cdot||_1$ from $\hat{T}$:
\begin{equation*}
    \Xi^{\error}_{\hat\mdp} = \left\{ \mdp 
    \mid \forall s,a \colon ||T^*(s,a)-\hat{T}(s,a)||_1 \leq \error(s,a) \right\}.
\end{equation*}
The $\error$ is typically defined from concentration inequalities given the dataset $\dataset$ such that the true MDP $M^*$ is captured by $\admissibleset$ with probability at least $1-\confspi$.
Then, finding the improved policy $\improved$ for \cref{eq:spi} is done through the following formulation~\citep{DBLP:conf/icml/LarocheTC19}, where we compute the $\Pi$-optimal policy $\improved$ on the \mlmdp~$\hat\mdp$ such that $\improved$ approximately improves over the baseline policy~$\baseline$ for all MDPs within the admissible set~$\admissibleset$:
\begin{equation*}
   \improved\in \argmax_{\pi \in \Pi} 
    \rho(\pi,\hat\mdp)
    \quad\text{s.t.}\quad \forall\mdp\in\admissibleset\colon 
    \rho(\pi,\mdp)
    \geq 
    \rho(\baseline,\mdp)
    -\zeta.
\end{equation*}
Since the admissible set~$\admissibleset$ captures the true MDP~$\mdp^*$ with probability $1-\confspi$, it provides a solution to~\cref{eq:spi}.
However, the amount of data required for each state-action pair to provide this guarantee is substantial~\citep{DBLP:conf/icml/LarocheTC19,DBLP:conf/ijcai/WienhoftSSDB023,DBLP:journals/corr/abs-2507-15532}.

\paragraph{SPIBB}
Providing a practical alternative,
\emph{safe policy improvement with baseline bootstrapping} (SPIBB)~\citep{DBLP:conf/icml/LarocheTC19} is an algorithm that restricts the policy improvement step to the baseline policy at state-action pairs with high uncertainty~\citep{DBLP:conf/nips/GhavamzadehPC16}. 
A threshold $\nwedge \in \mathbb{N}$ is specified to define the number of times a state-action pair must appear in the dataset $\dataset$ to allow improvement over the baseline.
To this end, we have a \emph{bootstrapped set} with \emph{uncertain} state-action pairs that occur less than $\nwedge$ times in the dataset~$\dataset$:
\begin{equation}
    \boots = \{(s,a)\in S\times A \mid N_{\mathcal{D}}(s,a)\leq \nwedge\}.
\end{equation}
Typically, $\nwedge$ is set as a hyperparameter, together with $\confspi$, and we quantify the improvement approximation $\zeta$~\citep{DBLP:conf/nips/GhavamzadehPC16,DBLP:conf/ijcai/WienhoftSSDB023}.
SPIBB optimizes the policy greedily over the expected reward~$Q^\pi_{\hat{M}}$ in the \mlmdp~$\hat M$,
subject to the constraint that $\forall(s,a) \in \boots\colon \pi'(a \mid s) = \baseline(a \mid s)$.
The probability mass of non-restricted actions is greedily assigned to the action that maximizes the action-value $Q^\pi_{\hat{M}}$. 
Given a state $s \in S$, let $A_{\not\boots}(s) = \{a \in A \mid (s,a) \notin \boots\} \subseteq A$ be the set of actions not in the bootstrapped set $\boots$.
Then, the update is:\footnote{In cases with no unique maximizing action $a'$, tie-breaking in the $\argmax$ is arbitrary.}
\begin{equation}
    \label{eq:spibb}
    \pi'(a \mid s) =
    \begin{cases}
       \baseline(a\mid s) & \text{ if } (s,a) \in \boots, \\
       \sum\limits_{a' \in A_{\not\boots}(s)} \baseline(a' \mid s) & \text{ if } a = \argmax\limits_{a' \in A_{\not\boots}(s)} Q^\pi_{\hat{M}}(s,a'),\\
       0 & \text{ otherwise.}
    \end{cases}
\end{equation}
Then, iterating $\pi \gets \pi'$ from the above~\cref{eq:spibb} until $\pi'=\pi$, starting from $\pi \gets \baseline$, then results in the policy $\improved$
that satisfies~\cref{eq:spi} with a $\zeta$ that depends on the chosen $N_\wedge$ and $\confspi$.
In the following, we assume that the baseline policy is known, but this assumption can be lifted by estimating it from the dataset~\citep{DBLP:conf/atal/SimaoLC20}, see~\cref{sec:est_baseline}.

%% file: sections/03-shielding.tex
\section{Probabilistic shielding in offline RL}
\label{sec:shielding}
We introduce \emph{probabilistic shielding} to offline RL, relying solely on a given dataset and knowledge of safe and unsafe states as input.
\paragraph{Type of shield.}
We focus on \emph{probabilistic shields}~\citep{court_probabilistic_2025,DBLP:conf/concur/0001KJSB20}, and only allow actions for which it is possible to satisfy the safety requirements with a certain probability from the current state onward.
Here, a shield is generally a function $A_{\shield} \colon S \to \powerset(A)$ mapping states to sets of allowed actions.
That is, we focus on \emph{pre-shielding}, where the shield restricts the set of actions the agent can choose from~\citep{cacm:shields}, rather than penalizing actions that lead to unsafe outcomes.
In offline RL, we do not have access to the true MDP~$\mdp^*$ and thus do not execute the policy before final deployment.
Instead, we restrict the actions that can be chosen during policy iteration, ensuring that the shield only allows actions for which it is possible to adhere to a probabilistic safety constraint.
In contrast to \emph{permissive policies}~\citep{draeger-et-al-tacas-2014,junges-et-al-tacas-2016}, we shield purely based on the probabilities of actions violating the safety requirements from the current state, and do not take into account past risks.
We refer to \citet{cacm:shields} for a detailed discussion on the various forms of shielding and their implications.
\paragraph{Overview.}
The process of constructing the shield from a dataset consists of three steps, described in the following subsections. 
\begin{enumerate}
    \item Construct an interval MDP model from the dataset $\dataset$. 
    \item Specify safety in terms of a reach-avoid specification on a subset of unsafe and target states. 
    \item Compute the worst-case probability of each state-action pair violating the safety specification to construct the shield. 
\end{enumerate}
\paragraph{Assumptions.}
To define a safety specification, we assume we are given a subset of \emph{unsafe} states $\unsafestates \subset S$ to avoid and a subset of \emph{target states} $\safestates \subset S$ to reach, with $\safestates \cap \unsafestates=\emptyset$.
    Additionally, we assume knowledge of the underlying graph of the true MDP~$M^*$, i.e., which transitions exist, but not the transition probabilities.
While this is not always assumed in SPI methods, it is a common assumption 
for estimating the IMDP and 
for analysis of the worst-case~\citep{DBLP:journals/mor/WiesemannKR13,DBLP:conf/nips/SuilenS0022}.
\subsection{Constructing an interval MDP}
Here, we construct an \emph{interval} MDP~(IMDP) with an interval-based \emph{uncertainty set} that captures the true MDP with high probability, similar to the admissible set for SPI.
\begin{definition}[IMDP]
An IMDP is a tuple $\mdp_{\intervals} = (S, s_0, A, \T_{\intervals}, R, \gamma)$. 
Here $S$, $s_0$, $A$, $\R$, and $\gamma$ are the same as in an MDP.
The partial \emph{interval transition function} is defined as $\T_{\intervals} \colon S \times A \hookrightarrow (S \to \intervals \cup \{0\})$, where $\intervals$ is the set of \emph{probability intervals} $\intervals = \{[l,u] \mid 0<l<u\leq1\}$.
\end{definition}
The interval transition function $\T_{\intervals}$ assigns either a probability interval or an exact probability of $0$ to each transition if the transition does not exist.
In contrast to the ball-shaped admissible set formalized using the $\ell_1$ norm, we construct a box-shaped uncertainty set, which allows an IMDP.
Note that in SPIBB, the admissible set is not constructed but solely defined to formalize a statistical guarantee.
Conversely, we explicitly construct the IMDP to compute a shield that provides robust safety guarantees based on the worst-case probabilities.
We construct IMDPs because they enable efficient analysis of worst-case probabilities, providing a lower bound on optimal probabilities, which we use to design the shield.

We construct interval ranges such that the true MDP $\mdp^*$ is captured with probability at least $1-\confimdp$.
We use the mode of a Dirichlet distribution as a point estimate~$\hat\T$~\citep{DBLP:conf/nips/SuilenS0022}.
We provide details in~\cref{app:map}.
We distribute $\confimdp$ over transitions via the union bound:
$\delta_{\T}= \nicefrac{\confimdp}{\sum_{s,a} |\supp(T(s,a))|}$
, where $|\supp(T(s,a))|$ is the number of successor states.
Given a point estimate $\hat\T$, and a confidence $\delta_{\T}$, we find the interval size $\errorrate_{\T}$ such that the true transition probability of $T^*$ falls within the interval with a probability at least $1-\delta_{\T}$. 
We find it each for transition $(s,a,s'_i)$, using Hoeffding’s inequality~\citep{Hoeffding}: 
\(
    \errorrate_{\T} = \sqrt{\nicefrac{\log\left[\nicefrac{2}{\delta_{\T}}\right]}{2N_{\dataset}(s,a)}}.
\)
Intuitively, since we know the sample sizes and have a desired confidence level $\delta_{\T}$, we determine the margin $\errorrate_{\T}$ around the point estimate.
Using this $\errorrate_{\T}$, we construct the interval for the state-action-state triplet $(s,a,s'_i)$ to be
$\hat{\T}_{\intervals}(s'_i\mid s,a) = [l, u]$, where $l = \max(\probmargin,\hat{\T}(s'_i \mid s, a)-\errorrate_{\T})$ and $u = \min(\hat{\T}(s'_i\mid s, a)+\errorrate_{\T},1)$.
Here, $\probmargin \in (0,1)$ is to ensure that the interval bounds $u$ and $l$ are in $[\probmargin, 1]$.
Repeating this process for every transition in the graph yields the IMDP $\hat{M}_{\intervals}$ conditioned on the dataset $\dataset$.
We achieve a guarantee on the level of the whole model by applying the \emph{union bound} across the transitions of the IMDP.%
\begin{corollary}
    \label{prop:pac:imdp}
    The transition probabilities $\T^*$ of the true MDP $M^*$ are within the transition intervals $\hat{\T_{\intervals}}$ of the interval MDP $\hat{M}_{\intervals}$ with probability at least $1 - \confimdp$. 
    That is, we have
    \(
    \Pr(\mdp^* \in \hat{\mdp}_{\intervals}) \geq 1-\confimdp.
    \)
\end{corollary}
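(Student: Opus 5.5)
The plan is to prove \cref{prop:pac:imdp} in two steps: a per-transition bound from Hoeffding's inequality, and then a union bound over all transitions in the (known) graph of $\mdp^*$. \textbf{Per transition.} Fix $(s,a)$ with $N_\dataset(s,a)>0$ and a successor $s'_i \in \supp(\T^*(s,a))$. The $N_\dataset(s,a)$ observed successors of $(s,a)$ are independent draws from $\T^*(s,a)$ by the Markov property. So the indicators $[\text{successor}=s'_i]$ are i.i.d.\ Bernoulli with mean $\T^*(s'_i\mid s,a)$. Hoeffding's inequality then gives
\[
\Pr\bigl(|\bar{\T}(s'_i\mid s,a)-\T^*(s'_i\mid s,a)|>\errorrate_{\T}\bigr)\le 2\exp\bigl(-2N_\dataset(s,a)\errorrate_{\T}^2\bigr)=\delta_{\T},
\]
where $\bar{\T}$ is the empirical frequency and the last equality is exactly the stated choice of $\errorrate_{\T}$.

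\textbf{Union bound.} Let $E_{s,a,s'}$ be the failure event of the per-transition step. The number of such events is $\sum_{s,a}|\supp(\T(s,a))|$, which is fixed because the graph is assumed known. Summing their probabilities gives at most $\sum_{s,a}|\supp(\T(s,a))|\,\delta_{\T}=\confimdp$. On the complement event, every true probability lies in $[\hat\T-\errorrate_{\T},\hat\T+\errorrate_{\T}]$. Transitions outside the graph are $0$ in both $\mdp^*$ and $\hat{\mdp}_{\intervals}$, so they cannot fail. Hence $\Pr(\mdp^*\in\hat{\mdp}_{\intervals})\ge 1-\confimdp$.

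\textbf{Main obstacle.} The difficulty is making the Hoeffding step rigorous for the estimator and intervals the paper actually uses. There are three issues.
\begin{itemize}
\item The point estimate $\hat\T$ is the Dirichlet mode rather than the empirical mean. I would either assume the concentration statement is made for the empirical frequency, or add the bias $|\hat\T-\bar\T|=O(1/N_\dataset(s,a))$ into the margin, noting that with a uniform prior the mode coincides with $\bar\T$.
\item The clipping must not exclude the truth. Clipping at $1$ is harmless. Clipping the lower bound at $\probmargin$ is harmless provided every true transition probability in the graph is at least $\probmargin$, and I would state this as an implicit assumption.
\item Since $N_\dataset(s,a)$ is random, the Bernoulli samples form a random-length sequence. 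I would handle this by the standard device of conditioning on $N_\dataset(s,a)$: the successor samples are i.i.d.\ given the count, by the strong Markov property (the ``stack of samples'' argument). Pairs with $N_\dataset(s,a)=0$ get the trivial interval, which contains the truth with probability one.
\end{itemize}
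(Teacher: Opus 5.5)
Your skeleton is exactly the paper's argument: Hoeffding per transition with the paper's $\errorrate_{\T}$, then a union bound over the $\sum_{s,a}|\supp(\T(s,a))|$ transitions of the known graph. The paper gives no separate proof beyond the construction text preceding the corollary. Your first two caveats are real, and the paper passes over them. Unless the Dirichlet prior is flat, the mode differs from the empirical frequency by $O(1/N_\dataset(s,a))$. Clipping at $\probmargin$ is harmless only if every positive true transition probability is at least $\probmargin$. Adding the bias to the margin and assuming all positive true probabilities are at least $\probmargin$ are the right fixes.

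The step that fails is your third bullet. Your per-transition paragraph also relies on it silently when it plugs the random count $N_\dataset(s,a)$ into Hoeffding. Conditioning on $N_\dataset(s,a)=n$ does \emph{not} make the observed successors i.i.d.\ draws from $\T^*(s,a)$. The strong Markov property says each successor is a fresh draw given the past, but the count is a stopping time that depends on the very successors drawn.

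For instance, let a single trajectory pass through a state $s$ whose only action returns to $s$ with probability $p<1$ and otherwise moves to an absorbing state. Given $N_\dataset(s,a)=n$, the successors are deterministically $n-1$ copies of $s$ followed by the absorbing state. So the empirical frequency of $s$ is $(n-1)/n$. Since $|(n-1)/n-p|\to 1-p$ while $\errorrate_{\T}\to 0$, for large $n$ the \emph{conditional} failure probability is $1$, not at most $\delta_{\T}$. The stack-of-samples coupling only guarantees that the first $n$ stack entries are i.i.d.\ for a \emph{deterministic} $n$.

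The standard repair keeps the coupling. Let $\bar Y_n$ be the frequency of $s'_i$ among the first $n$ stack entries. Bound the probability that $\bar Y_n$ deviates by more than $\errorrate_n$ for \emph{some} $n\geq 1$, via a union bound over $n$ with budget $\delta_{\T}/(n(n+1))$ in place of $\delta_{\T}$. That gives $\errorrate_n=\sqrt{\log[2n(n+1)/\delta_{\T}]/(2n)}$. These budgets sum to $\delta_{\T}$, so the event at the random index $n=N_\dataset(s,a)$ is covered, and your union bound over transitions then goes through unchanged. Alternatives are an anytime-valid concentration bound, or assuming per-pair sample sizes fixed in advance (a generative model).

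With the paper's exact $\errorrate_{\T}$ and trajectory data, neither your conditioning device nor the paper's one-line appeal to Hoeffding establishes the per-transition bound. A rigorous proof needs one of these changes; the first costs a logarithmically wider interval.
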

Note that tighter intervals may be possible using other statistical methods~\citep{MWW25,DBLP:conf/ijcai/WienhoftSSDB023}.
Still, we use Hoeffding's inequality here for its simplicity.
To ensure robustness, in the subsequent steps, we take a pessimistic view of the interval MDP, accounting for the worst-case realization of the transition probabilities within the intervals to compute the shield, providing a lower bound with high probability.

\subsection{Defining the probabilistic specification}
The second step in the process of constructing a shield is defining a probabilistic safety specification. 
We encode our
specification with a \emph{temporal logic formula} $\varphi = \neg \unsafestates \;\mathsf{U}\; \safestates$ (read: ``not unsafe \emph{until} target'')~\citep{model_checking_bible}.
Specifically, we formulate a \emph{reach-avoid} formula, but an ``avoid'' formula may also be used.
Then, we formulate a probabilistic reach-avoid specification on the formula $\varphi$.

Given the estimated IMDP $\hat{\imdp}$ and the formula~$\varphi$, we define our specification based on the maximal reach-avoid probabilities.
Since the IMDP effectively defines ranges of probabilities, we robustly address this uncertainty by considering the worst case~\citep{DBLP:conf/cav/PuggelliLSS13}:
\[
\maxprob(s) = \max_{\pi\in \Pi} \min_{M \in \hat{M}_{\intervals}} \mathbb{P}_M^{\pi,\varphi}(s),
\]
where $\mathbb{P}_M^{\pi,\varphi}(s)$ is the probability of satisfying the formula $\varphi$ under policy~$\pi$ in the MDP~$M$, starting from the state $s\in S$%
~\citep{model_checking_bible}.
Intuitively, $\maxprob(s_0)$ represents, from the initial state~$s_0$, the probability that an agent will not enter any unsafe state before reaching a target state, when the agent selects actions that are optimal in terms of reach-avoid in the worst-case.
In the probabilistic \emph{specification}, the probability must satisfy a \emph{safety threshold}~$\threshold \in [0,1]$ from the initial state~$s_0$.
We disregard the past risk for tractability purposes~\citep{cacm:shields}, departing from semantics where the shield enforces the specification in the initial state of the MDP~\citep{court_probabilistic_2025,junges-et-al-tacas-2016}.
Instead, the shield ensures, with high probability, that for each state $s\in S$, the optimal worst-case reach-avoid probability adheres to the threshold. 
Specifically, the shield restricts actions such that, from that point onward and when ignoring past risks, the specification can still be satisfied.

\subsection{Computing and constructing the shield}
In the final step, we compute, for all states, the worst-case probabilities of violating the specification on the IMDP.
The worst-case probabilities can be induced by either the lower or upper bounds of the individual transition probabilities, hence the need for intervals.
We design the shield by thresholding the worst-case probabilities according to the specification.

\paragraph{Computing the probabilities.}
Given the specification and the IMDP $\hat{\imdp}$, we find the associated
function that map states $\rrV \colon S \to [0, 1]$ and state-action pairs $\rrQ \colon S \times A \to [0, 1]$ to maximal worst-case probabilities.
We set $\rrV(s) = 1$ if $s \in \safestates$ and $\rrV(s) = 0$ if $s \in \unsafestates$.
Then, we have 
for $s \in S \setminus (\safestates \cup \unsafestates)$,
the equations:
\begin{align}
\rrV(s) &= \max_{a\in A}\rrQ(s,a),\notag\\
\label{eq:probs:inner}
\rrQ(s,a) &= 
\inf\limits_{\underline{T}(s,a) \in T_\intervals(s,a) \cap \Delta(S)} \sum\limits_{s'\in S} \underline{T}(s' \mid s, a) \rrV(s')
.
\end{align}
The fixed points $\rrV^*$ and $\rrQ^*$, represent the \emph{robust optimal}, i.e., optimal worst-case reach-avoid probabilities, such that $\rrV^*(s) = \maxprob(s)$.
Specifically, the action-probability $\rrQ^*(s,a)$ represents, in expectation over successor states, the worst-case reach-avoid probability at any state $s\in S$ when taking action $a\in A$ and following the robust optimal reach-avoid actions afterwards~\citep{DBLP:conf/cav/PuggelliLSS13}.

For a given MDP $M$, the optimal (non-robust) reach-avoid probabilities $\mathbb{V}_M \colon S \to [0, 1]$ with 
$
\mathbb{V}_M^*(s) = \max_{\pi\in\Pi} \mathbb{P}_M^{\pi,\varphi}(s)$
and $\mathbb{Q}_M \colon S \times A \to [0, 1]$ with $\mathbb{Q}^*_M(s,a) = \sum_{s'} T(s'\mid s,a) \mathbb{V}^*(s')$ for $s \not\in \unsafestates \cup \safestates$
follow the computation of $\rrV,\rrQ$ by removing the inner infimum in \cref{eq:probs:inner} as $T$ is known.
We provide the following lower bound:
\begin{lemma}
    \label{lem:robustlb}
    Robust optimal reach-avoid probabilities $\rrV^*$ provide a lower bound on the true reach-avoid probabilities $\rV^*$ with probability at least $1-\confimdp$:
    \(
        \forall s \in S\colon \Pr\left(\rrV^*(s) \leq \rV^*(s)\right) \geq 1 - \confimdp.
    \)
\end{lemma}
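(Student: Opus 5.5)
We condition on the event $E = \{\mdp^* \in \hat{\imdp}\}$. By \cref{prop:pac:imdp}, $\Pr(E) \geq 1-\confimdp$. It therefore suffices to show that on $E$ the inequality $\rrV^*(s) \leq \rV^*(s)$ holds deterministically for every $s \in S$ simultaneously. This even gives the stronger statement $\Pr(\forall s\colon \rrV^*(s)\leq\rV^*(s)) \geq 1-\confimdp$, which implies the per-state claim.

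On $E$, the transition function $T^*(s,a)$ lies in $T_\intervals(s,a)\cap\Delta(S)$ for every $(s,a)$. The intervals and the true graph share the same support, since $T_\intervals$ assigns $0$ exactly to non-existing transitions. The cleanest route uses the max--min characterization $\rrV^*(s)=\maxprob(s)=\max_{\pi}\min_{M\in\hat{\imdp}}\mathbb{P}^{\pi,\varphi}_M(s)$ stated in the paper. Let $\pi^r$ be a maximizing (robust optimal) policy. Then
\[
\rrV^*(s) = \min_{M\in\hat{\imdp}}\mathbb{P}^{\pi^r,\varphi}_M(s) \leq \mathbb{P}^{\pi^r,\varphi}_{\mdp^*}(s) \leq \max_{\pi\in\Pi}\mathbb{P}^{\pi,\varphi}_{\mdp^*}(s) = \rV^*(s).
\]
The first inequality holds because $\mdp^*$ is one of the feasible models, and the second because $\pi^r\in\Pi$.

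If one prefers to argue directly from the fixed-point equations, the same conclusion follows by induction on the value-iteration sequences. Initialize both $\rrV^{(0)}$ and $\mathbb{V}_{\mdp^*}^{(0)}$ with $1$ on $\safestates$ and $0$ elsewhere. Suppose $\rrV^{(k)}\leq \mathbb{V}_{\mdp^*}^{(k)}$ pointwise. Then for each $(s,a)$, the infimum in \cref{eq:probs:inner} is at most the value at $T^*(s,a)$, which is feasible on $E$. This gives
\[
\rrQ^{(k+1)}(s,a)\leq \sum_{s'} T^*(s'\mid s,a)\,\rrV^{(k)}(s') \leq \sum_{s'} T^*(s'\mid s,a)\,\mathbb{V}^{(k)}_{\mdp^*}(s').
\]
The second inequality uses monotonicity, since the weights are nonnegative. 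Taking the max over $a$ preserves the inequality, and the boundary values agree on $\safestates\cup\unsafestates$. Passing to the limit gives the claim.

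The main obstacle is justifying convergence of these iterations to the stated fixed points, or equivalently the identity $\rrV^*=\maxprob$. For reachability, value iteration from below converges to the least fixed point, which is the optimal probability. For IMDPs this is the result of \citet{DBLP:conf/cav/PuggelliLSS13}, which I would invoke rather than reprove. Memoryless policies suffice for both the robust and the nominal reach-avoid problems, so restricting to $\Pi$ loses nothing. With that in hand, the remaining steps are the simple feasibility and monotonicity arguments above.
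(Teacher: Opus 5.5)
Your proposal is correct and is the argument the paper relies on. The paper gives no written proof of this lemma and rests it on \cref{prop:pac:imdp} plus the worst-case semantics of $\maxprob$: on the event $\mdp^*\in\hat{\imdp}$, the true model is among those minimized over, so $\rrV^*(s)=\maxprob(s)\leq\max_{\pi\in\Pi}\mathbb{P}^{\pi,\varphi}_{\mdp^*}(s)=\rV^*(s)$ for all $s$ at once. Your value-iteration route and your remark that $\rrV^*$ must be read as the least fixed point are sound additions; the paper simply takes both for granted through its stated identity $\rrV^*=\maxprob$.
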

\begin{corollary}
    \label{lem:robustQ}
    For $s \in S$ and $a \in A(s)$:
    \(\mathbb{Q}_{\hat{\imdp}}^*(s,a) \leq \mathbb{Q}_{M^*}(s,a)\).
\end{corollary}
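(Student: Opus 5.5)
We prove the statement on the event $\mdp^* \in \hat{\imdp}$, which by \cref{prop:pac:imdp} holds with probability at least $1-\confimdp$. The inequality therefore holds with the same probability, which is the sense in which it is meant, inheriting the guarantee of \cref{lem:robustlb}. Throughout, $\mathbb{Q}_{M^*}$ is read as the optimal $\mathbb{Q}^*_{M^*}$, i.e., $\rQ^*$. The argument is a single one-step Bellman comparison that uses \cref{lem:robustlb} for the successor states.

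First, the boundary cases. For $s \in \safestates \cup \unsafestates$ both action values are fixed by the same convention, so they coincide and the inequality is trivial. For $s \notin \safestates \cup \unsafestates$, write out both sides:
\begin{align*}
\rrQ^*(s,a) &= \inf_{\underline{T}(s,a) \in T_\intervals(s,a)\cap\Delta(S)} \sum_{s'} \underline{T}(s'\mid s,a)\,\rrV^*(s'),\\
\rQ^*(s,a) &= \sum_{s'} T^*(s'\mid s,a)\,\rV^*(s').
\end{align*}
On the event $\mdp^* \in \hat{\imdp}$, the row $T^*(s,a)$ is a probability distribution that lies inside the intervals $T_\intervals(s,a)$. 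Here we use the known graph assumption: zero transitions are exactly zero in both models. Hence $T^*(s,a)$ is a feasible point of the infimum, which gives
\[
\rrQ^*(s,a) \le \sum_{s'} T^*(s'\mid s,a)\,\rrV^*(s').
\]
Next we invoke \cref{lem:robustlb} for every successor $s'$. It is essential that we use its pathwise form on the event $\mdp^*\in\hat{\imdp}$, namely $\rrV^*(s') \le \rV^*(s')$ simultaneously for all $s'$, and not merely the per-state probabilistic statement. Since the coefficients $T^*(s'\mid s,a)$ are nonnegative, the sum is monotone in the values, so
\[
\sum_{s'} T^*(s'\mid s,a)\,\rrV^*(s') \le \sum_{s'} T^*(s'\mid s,a)\,\rV^*(s') = \rQ^*(s,a).
\]
Combining the two displays gives the claim.

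The main obstacle is ensuring that the simultaneous form of \cref{lem:robustlb} is available. If its proof only gives the per-state bound, I would reprove the simultaneous version directly on the event $\mdp^* \in \hat{\imdp}$. To do so, fix the robust optimal policy $\pi$. Because $\mdp^*$ belongs to the uncertainty set,
\[
\rrV^*(s') = \min_{M \in \hat{\imdp}} \mathbb{P}^{\pi,\varphi}_M(s') \le \mathbb{P}^{\pi,\varphi}_{\mdp^*}(s') \le \rV^*(s')
\]
for all $s'$ at once. Alternatively, one can argue by induction on value iteration, showing that each iterate of the robust operator stays below the corresponding iterate of the nominal operator of $\mdp^*$, and then pass to the limit.
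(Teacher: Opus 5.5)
Your proof is correct and is essentially the argument the paper compresses into its one-line justification that the corollary follows from \cref{lem:robustlb} and \cref{prop:pac:imdp}: on the single event $\mdp^* \in \hat{\imdp}$, the true row $T^*(s,a)$ is feasible for the inner infimum and $\rrV^* \le \rV^*$ holds at every successor, so monotonicity yields the bound. You also make explicit two points the paper leaves implicit: the inequality holds only with probability at least $1-\confimdp$, and the argument needs the simultaneous, pathwise form of \cref{lem:robustlb} on that event rather than its per-state probabilistic statement.
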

\Cref{lem:robustQ} is due to \Cref{lem:robustlb} and \Cref{prop:pac:imdp}.
For IMDPs, the inner infimum of \Cref{eq:probs:inner} can be solved efficiently through a bisection method in combination with robust dynamic programming~\citep{DBLP:journals/ior/NilimG05,DBLP:journals/mor/Iyengar05,DBLP:conf/cdc/WolffTM12,DBLP:conf/cav/PuggelliLSS13}.
Notably, computing these worst-case maximal probabilities for IMDPs is supported out of the box by probabilistic model checkers, e.g., \prism~\citep{KNP11} and \storm~\citep{DBLP:conf/cav/DehnertJK017}. 
\paragraph{Designing the shield}
The \emph{safety threshold} $\threshold \in [0,1]$ is a settable parameter to bound the robust reach-avoid probabilities. 
Any state-action pair for which the probability is above this safety threshold is considered unsafe. 
Thus, we shield with respect to the robust optimal reach-avoid probabilities in the IMDP.
That is, with high probability, the shield allows any action for which it is possible to satisfy the threshold when ignoring past risk~\citep{cacm:shields}.
We define the notion of a safe action below.
\begin{definition}[$\threshold$-Safe action]
    \label{def:safe:action}
    An action $a \in A$ for a state $s \in S$ is a \emph{$\threshold$-safe action} in state $s$ if from state $s$, the probability of violating the reach-avoid specification in the IMDP is at most $\threshold$, i.e., it satisfies $\rrQ^*(s,a) > 1 - \threshold$.
    Otherwise, action $a$ is \emph{$\threshold$-unsafe} in state~$s$.
\end{definition}
\begin{assumption}[Feasibility]
We assume the threshold $\threshold \in [0,1]$ is \emph{feasible}, i.e., a $\threshold$-safe action is always available.
Otherwise, it is infeasible to satisfy the safety specification, and no shield exists.
That is, 
\(
\forall s \in S,\exists a \in A \colon 
\rrQ^*(s,a) > 1 - \threshold
.
\)
\end{assumption}
Next, we define the shield resulting from \cref{def:safe:action}.

\begin{definition}[$\threshold$-Shield]
    \label{def:shield}
    A \emph{$\threshold$-shield} is a function $A_{\shield} \colon S \to \powerset(A)$ that returns the set of $\threshold$-safe actions given a state $s \in S$, defined as: \[A_{\shield}(s) = \left\{a \in A \mid 
    \rrQ^*(s,a) > 1-\threshold
    \right\}\subseteq A.\]
    Conversely, let $\underline{A}_{\shield}(s) = A\setminus A_{\shield}(s)$ be the set of $\threshold$-unsafe actions such that $\forall s \colon \underline{A}_{\shield}(s) \cup {A}_{\shield}(s) = A$ and $\underline{A}_{\shield}(s) \cap {A}_{\shield}(s) = \emptyset$.
\end{definition}
Consequently, we define a policy to be safe if, for all states, its support is a subset of the actions allowed by the shield.
\begin{definition}[$\threshold$-Safe policy]
    \label{def:safepol}
    A policy $\pi \in \Pi$ is \emph{$\threshold$-safe} if its support contains only $\threshold$-safe actions: $\forall s \in S\colon\supp(\pi(s))  \subseteq A_{\shield}(s)$.
\end{definition}

\begin{remark}[Infeasibility]
    In practice, when there exist states where the threshold is infeasible, we select the action closest to the threshold, and we allow all actions with probabilities within a range $\buffer \in [0,1]$ around the probability of the closest action.
\end{remark}

\subsection{Safety guarantees}
Finally, we study the theoretical properties of the shield.
In the above, we constructed the IMDP to compute the shield.
Now, we argue about guarantees of the shield on the true MDP $M^*$. 
\begin{definition}
    The \emph{shielded-MDP} $\shielded{M} = \langle S, A, \shielded{\T}, R, \gamma \rangle$ for an MDP $M$ and shield $\shielded{A}$ is an MDP where $\shielded{\T}(s,a) = \T(s,a)$ if $a \in \shielded{A}(s)$ and $\shielded{\T}(s,a) = \bot$ otherwise, i.e., $a$ does not exist in the latter case.
\end{definition}
Next, we present a probabilistic guarantee that the shield contains only safe actions with high probability.
\begin{theorem}
    \label{thm:highprobsafe}
    Given a confidence level $\confshield$, and feasible threshold~$\threshold$, and conditioned on a dataset $\dataset$, a $\threshold$-shield~$A_{\shield}$ solely maps to $\threshold$-safe actions on the true MDP $\mdp^*$ with probability at least $1-\confshield$.
    Formally, starting from any state $s \in S$, the probability that the shielded action $a \in A_{\shield}(s)$ is also $\theta$-safe on the true MDP $\mdp^*$ is:
    \[
    \Pr\bigg(\mathbb{Q}^*_{\mdp^*}(s,a) \geq 1- \threshold \bigg) \geq 1 - \confshield.
    \]
\end{theorem}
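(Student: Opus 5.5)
The plan is to reduce the statement to a single high-probability event: the true MDP lies in the estimated IMDP, which \Cref{prop:pac:imdp} guarantees with probability at least $1-\confshield$. On this event we show deterministically that every shielded action is $\threshold$-safe on $\mdp^*$. The probabilistic bound then follows by monotonicity of probability.

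Concretely, let $\mathcal{E} = \{\mdp^* \in \hat{\imdp}\}$, so that $\Pr(\mathcal{E}) \geq 1-\confshield$ by \Cref{prop:pac:imdp}. Fix $s\in S$ and $a \in A_{\shield}(s)$. By \Cref{def:shield}, $\rrQ^*(s,a) > 1-\threshold$. We then argue that on $\mathcal{E}$ we have $\rrQ^*(s,a) \leq \mathbb{Q}^*_{\mdp^*}(s,a)$, which is the content of \Cref{lem:robustQ}. Chaining the two inequalities gives $\mathbb{Q}^*_{\mdp^*}(s,a) > 1-\threshold$, and in particular $\geq 1-\threshold$. Hence $\mathcal{E} \subseteq \{\mathbb{Q}^*_{\mdp^*}(s,a)\geq 1-\threshold\}$, and the bound follows. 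Since $\mathcal{E}$ does not depend on $(s,a)$, we in fact obtain the stronger simultaneous statement that all shielded actions are safe at once.

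The main obstacle is justifying the inequality on $\mathcal{E}$ carefully, since \Cref{lem:robustQ} is stated without its conditioning. I would establish it as follows.

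\textbf{Terminal states.} If $s \in \safestates\cup\unsafestates$, both functions are fixed by the boundary values, so the inequality is immediate.

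\textbf{Bounding $\rrV^*$.} On $\mathcal{E}$, the true kernel $\T^*(s,a)$ is a feasible point of the infimum in \cref{eq:probs:inner}. This gives
\[
\rrQ^*(s,a) \leq \sum_{s'}\T^*(s'\mid s,a)\,\rrV^*(s').
\]
To bound $\rrV^*$, note that the Bellman operator for $\rrV$ is pointwise dominated by the true optimal reach-avoid operator on $\mdp^*$. Both operators are monotone, so iterating from the common initialization (zero off targets) preserves $\rrV^{(k)}\leq \mathbb{V}^{(k)}_{\mdp^*}$ at every step. Passing to the least fixed points yields $\rrV^*(s')\leq\rV^*(s')$ for all $s'$ simultaneously on $\mathcal{E}$; this is \Cref{lem:robustlb} made uniform.

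\textbf{Conclusion.} Substituting this bound into the sum gives
\[
\rrQ^*(s,a)\leq \sum_{s'}\T^*(s'\mid s,a)\,\rV^*(s') = \mathbb{Q}^*_{\mdp^*}(s,a).
\]

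A minor point to check is that the least-fixed-point characterization of reach-avoid probabilities is used consistently for both the robust and true models.
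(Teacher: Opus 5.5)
Your proposal is correct and follows the paper's route: a shielded action satisfies $\rrQ^*(s,a) > 1-\threshold$ by \Cref{def:shield}, and on the event $\{\mdp^* \in \hat{\imdp}\}$ from \Cref{prop:pac:imdp} one has $\rrQ^*(s,a) \leq \mathbb{Q}^*_{\mdp^*}(s,a)$, which is \Cref{lem:robustQ} and which the paper simply cites. Your extra work fills in what the paper leaves implicit when it derives \Cref{lem:robustQ} from the per-state \Cref{lem:robustlb}: you fix one event so the bound holds for all successor states at once, and you prove $\rrV^* \leq \rV^*$ on that event by comparing the monotone Bellman operators and their least fixed points.
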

\begin{proof}
    We defined an action $a \in A_{\shield}(s)$ to be a $\threshold$-safe action from state $s \in S$ if $\rrQ^*(s,a) \geq 1-\threshold$.
    From \cref{lem:robustQ}, we derive that with probability at least $1 - \confshield$, we have that $\rrQ^*(s,a) \leq \rQ^*(s,a)$. \qed
\end{proof}

The proof follows from \cref{lem:robustQ} and our definition of $\threshold$-safety with respect to robust reach-avoid probabilities.
Namely, condition on any $\dataset$, the IMDP $\hat\imdp$ contains the true MDP $\mdp^*$ with probability at least $1-\confshield$.
Thus, the shield on the IMDP also provides, with high probability, worst-case safety guarantees on the true MDP $\mdp^*$.
\begin{corollary}
    Safety with respect to the reach-avoid probabilities on the true MDP~$\mdp^*$ is lower bounded by the shielded MDP $\shielded{\mdp^*}$:
    $\forall s \in S \colon \mathbb{V}^*_{\shielded{\mdp^*}}(s) \leq \rV^*(s) $ with probability at least $1-\confimdp$.
\end{corollary}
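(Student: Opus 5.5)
The inequality $\mathbb{V}^*_{\shielded{\mdp^*}}(s) \leq \rV^*(s)$ should in fact hold deterministically, for every dataset $\dataset$. The plan is therefore to prove it pointwise and then note that an event of probability one trivially has probability at least $1-\confimdp$. The only role of the data is that the shield $A_{\shield}$ is computed from $\hat{\imdp}$. Whatever $A_{\shield}$ turns out to be, the shielded MDP $\shielded{\mdp^*}$ differs from $\mdp^*$ only by deleting actions, so its optimal reach-avoid probabilities cannot exceed those of $\mdp^*$.

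The first step is to check that $\shielded{\mdp^*}$ is a well-defined MDP. By the feasibility assumption, for every $s \in S$ there is some $a$ with $\rrQ^*(s,a) > 1-\threshold$. Hence $A_{\shield}(s) \neq \emptyset$, and every state keeps at least one enabled action. The set of memoryless policies of $\shielded{\mdp^*}$ is then
\[
\Pi_{\shield} = \{\pi \in \Pi \mid \forall s\colon \supp(\pi(s)) \subseteq A_{\shield}(s)\},
\]
which is exactly the set of $\threshold$-safe policies of \cref{def:safepol}, and $\Pi_{\shield} \subseteq \Pi$.

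The second step is to show that for every $\pi \in \Pi_{\shield}$ and every $s$ we have $\mathbb{P}^{\pi,\varphi}_{\shielded{\mdp^*}}(s) = \mathbb{P}^{\pi,\varphi}_{\mdp^*}(s)$. The argument is that the Markov chains induced by $\pi$ on $\shielded{\mdp^*}$ and on $\mdp^*$ coincide. Indeed, $\pi$ only puts mass on actions $a \in A_{\shield}(s)$, and for those actions $\shielded{\T}(s,a) = \T^*(s,a)$. The target and unsafe sets $\safestates, \unsafestates$, and hence the formula $\varphi$, are the same in both models, so the probability of satisfying $\varphi$ agrees.

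The third step takes the maximum:
\[
\mathbb{V}^*_{\shielded{\mdp^*}}(s) = \max_{\pi\in\Pi_{\shield}} \mathbb{P}^{\pi,\varphi}_{\mdp^*}(s) \leq \max_{\pi\in\Pi} \mathbb{P}^{\pi,\varphi}_{\mdp^*}(s) = \rV^*(s).
\]
The inequality holds because we maximize over a subset. Memoryless policies suffice for optimal reach-avoid in finite MDPs, so both maxima are attained within the respective memoryless classes.

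The main (mild) obstacle is the bookkeeping about which objects are random. The shield, and hence $\shielded{\mdp^*}$, depends on $\dataset$, so the statement is about a random MDP. Since the argument above holds for every realization of $\dataset$ in which feasibility holds, no appeal to \cref{prop:pac:imdp} is needed for this inequality. If one additionally wants the complementary statement that the shield does not destroy safety, namely $\mathbb{V}^*_{\shielded{\mdp^*}}(s) \geq 1-\threshold$ on the good event, that is where \cref{thm:highprobsafe} and \cref{lem:robustQ} enter. That bound also requires a short extra argument that the robust-optimal actions themselves lie in $A_{\shield}$.
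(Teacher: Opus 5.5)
Your argument is correct, and it takes a different, more elementary route than the paper. The paper gives no separate proof of this corollary. It presents it as a consequence of \cref{thm:highprobsafe}, i.e.\ of \cref{lem:robustQ} combined with \cref{prop:pac:imdp}, by conditioning on the event $\mdp^*\in\hat{\imdp}$; this conditioning is where the qualifier ``with probability at least $1-\confimdp$'' comes from.

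You instead observe that $\shielded{\mdp^*}$ arises from $\mdp^*$ purely by deleting actions. Its memoryless policies therefore form the subset $\Pi_{\shield}\subseteq\Pi$ and induce exactly the same Markov chains as in $\mdp^*$, and a maximum over a subset cannot exceed the maximum over the whole set. This shows the inequality holds surely, for every dataset; feasibility is only used to keep $A_{\shield}(s)\neq\emptyset$. So for the statement as literally written, the PAC guarantee on the IMDP is not needed and the confidence level is slack. The lemmas the paper points to compare IMDP values with true-MDP values, which is not what drives this comparison.

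What the paper's framing buys is the link to the non-trivial direction, which is the one that actually certifies safety. On the event $\mdp^*\in\hat{\imdp}$ one gets $\rrV^*(s)\leq\mathbb{V}^*_{\shielded{\mdp^*}}(s)$. Hence shielding never pushes the true optimal reach-avoid probability below the robust bound, and under feasibility not below $1-\threshold$. As you anticipate, that step needs the extra observation that some robust-optimal memoryless policy uses only shield-admitted actions. This holds because any action $a$ such a policy chooses at a non-absorbing $s$ satisfies $\rrQ^*(s,a)=\rrV^*(s)=\max_{a'}\rrQ^*(s,a')>1-\threshold$.
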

Equality is not guaranteed in the above, so the shield may be more conservative than a shield using the true probabilities.
Nonetheless, \Cref{thm:highprobsafe} can be lifted to affect shielded policies; i.e., if any action deemed unsafe from the shield is ruled out, then the same probabilistic guarantee holds for the resulting policy. 

\section{Probabilistic Shielding for SPI}
\label{sec:shield:spi}
In this section, we integrate shields into SPIBB, yielding a new algorithm variant that we call \emph{shielded SPIBB} (\shieldspibb).
First, we describe how we ensure the safety of the policies resulting from \shieldspibb by restricting the policy space and shielding the baseline.
Then, we analyze the policies that \shieldspibb produces.

\paragraph{Other SPI algorithms.} While we focus the presentation on SPIBB, the shield may also be similarly applied to other SPI algorithms.
For instance, DUIPI~\citep{schneegassDuipi} is an SPI algorithm that penalizes the action-values of the \mlmdp with high uncertainty by computing their variances.
We also construct a shielded version of DUIPI, which we name \emph{shielded} DUIPI (DUIPI$_{\shield}$), which we describe in~\cref{app:duipi}.

\subsection{Applying the shield to SPIBB}
We enforce the shield on the baseline policy and incorporate it into the SPIBB algorithm.
Applying the shield ensures that resulting policies have no support over unsafe actions.

\paragraph{Shielding the baseline.}
Firstly, we enforce the shield on the baseline policy $\baseline$, creating a \emph{shielded baseline}~$\shieldedbaseline\in\Pi$.
\begin{definition}[Shielded baseline]
Given the baseline policy $\baseline \in \Pi$ and the shield $\shielded{A} \colon S \to \powerset(A)$, 
the \emph{shielded baseline} $\shieldedbaseline \in \Pi$ with probabilities for $a\in A$ given $s\in S$
, with $\underline{A}_{\shield}^s = \underline{A}_{\shield}(s)$ is defined as:
\[
\shieldedbaseline(a \mid s) = 
   \left[
   a \in A_{\shield}(s)
   \right]
   \left(\baseline(a\mid s) + \sum_{a' {\in} \underline{A}_{\shield}^s} \baseline(a'\mid s) \cdot \left(|A_{\shield}(s)|\right)^{-1}\right).
\]
\end{definition}
For each state $s\in S$, any $\threshold$-unsafe action $a \not\in\shielded{A}(s)$ has a probability of zero, and the probability mass of unsafe actions is distributed equally across the remaining safe actions.
That is, for safe actions $a \in \shielded{A}(s)$, we add the sum of probability mass $\baseline(a'\mid s)$ of all unsafe actions $a' \in \underline{A}_{\shield}(s)$ in the baseline policy, divided by the number of safe actions $|A_{\shield}(s)|$.

\begin{corollary}
    \label{lem:shieldedbaseline}
    The shielded baseline $\shieldedbaseline$ is $\threshold$-safe.
\end{corollary}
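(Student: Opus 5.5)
We verify \cref{def:safepol} directly: for every state $s \in S$ we show $\supp(\shieldedbaseline(s)) \subseteq A_{\shield}(s)$. Along the way we check that $\shieldedbaseline(s)$ is a well-defined probability distribution, which is needed for $\shieldedbaseline \in \Pi$.

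\emph{Well-definedness.} By the Feasibility assumption, for every $s$ there is some $a$ with $\rrQ^*(s,a) > 1-\threshold$. Hence $A_{\shield}(s) \neq \emptyset$ by \cref{def:shield}, so $|A_{\shield}(s)| \geq 1$ and the factor $(|A_{\shield}(s)|)^{-1}$ is defined. Each value $\shieldedbaseline(a \mid s)$ is nonnegative, since it is a product of an Iverson bracket and a sum of nonnegative terms. For normalization, sum over $a \in A_{\shield}(s)$:
\[
\sum_{a \in A_{\shield}(s)} \baseline(a\mid s) + |A_{\shield}(s)| \cdot \frac{1}{|A_{\shield}(s)|}\sum_{a' \in \underline{A}_{\shield}^s}\baseline(a'\mid s) = \sum_{a\in A}\baseline(a\mid s) = 1 .
\]
The last step uses that $A_{\shield}(s)$ and $\underline{A}_{\shield}(s)$ partition $A$, as stated in \cref{def:shield}.

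\emph{Support.} Let $a \notin A_{\shield}(s)$. Then the Iverson bracket $[a \in A_{\shield}(s)]$ equals $0$, so $\shieldedbaseline(a\mid s) = 0$ and $a \notin \supp(\shieldedbaseline(s))$. It follows that $\supp(\shieldedbaseline(s)) \subseteq A_{\shield}(s)$ for every $s$, which is exactly $\threshold$-safety.

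Both steps are routine. The only point needing care is the nonemptiness of $A_{\shield}(s)$, which is what makes the redistribution formula well-defined, and this is where the Feasibility assumption is used. Without it, one would fall back on the infeasibility handling in the Remark, which also guarantees at least one allowed action per state.
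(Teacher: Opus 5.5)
Your proof is correct and matches the paper's reasoning. The paper notes right after defining the shielded baseline that the Iverson bracket $[a \in A_{\shield}(s)]$ gives every $\threshold$-unsafe action probability zero, so the support lies in $A_{\shield}(s)$ as \cref{def:safepol} requires; your extra check that the redistributed mass is a valid distribution, using the Feasibility assumption to guarantee $A_{\shield}(s)\neq\emptyset$, fills in a detail the paper leaves implicit.
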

\paragraph{Shielding SPIBB} 
First, we amend the bootstrapped set $\boots$.
In addition to the state-action pairs that appear fewer than $N_{\wedge}$ times in the dataset~$\dataset$, we include state-action pairs deemed unsafe by the shield, regardless of their frequency of occurrence in the dataset.
\begin{equation*}
    \boots_{\shield} = \left\{(s,a)\in S\times A:N_{\mathcal{D}}(s,a)\leq N_{\wedge} \lor a \not\in A_{\shield}(s)\right\}
\end{equation*}
Consequently, for these state-action pairs, the shielded baseline is used, which, by~\Cref{lem:shieldedbaseline}, adheres to the shield.

Recall \cref{eq:spibb}.
Similarly, in \shieldspibb we optimize the policy subject to: $\forall(s,a) \in \boots_{\shield}\colon \pi'(a \mid s) = \shieldedbaseline(a \mid s)$.
Then, \shieldspibb addresses a SPI problem akin to~\Cref{eq:spi} where the objective is to find a policy $\shieldedimproved$ that approximately improves over the shielded baseline $\shieldedbaseline$ on the shielded \mlmdp~$\shielded{\hat{\mdp}}$ with high probability.
Thus, the policy improvement step of \shieldspibb that adheres to the shield is:
\begin{equation}
    \label{eq:shield:spibb}
    \pi'(a \mid s) =
    \begin{cases}
       \shieldedbaseline(a\mid s) & \text{ if } (s,a) \in \boots_{\shield}, \\
       \sum\limits_{a' \in A_{\not\boots_{\shield}}(s)} \shieldedbaseline(a' \mid s) & \text{ if } a = \argmax\limits_{a' \in A_{\not\shielded{\boots}}(s)} Q^\pi_{\shielded{\hat{M}}}(s,a'),\\
       0 & \text{ otherwise.}
    \end{cases}
\end{equation}
Intuitively, \shieldspibb greedily maximizes the action-value~$Q^\pi_{\shielded{\hat{M}}}$ across the safe actions of the shielded \mlmdp outside of the set $\shielded{\boots}$.
\subsection{Analysis of shielded SPIBB}
Let $\Pi_{\shielded{\boots}} = \left\{\pi \colon S \to \Delta(A) \mid \forall s,a\in\shielded{\boots} \colon \pi(a \mid s) = \shieldedbaseline(a\mid s)\right\}$ be the set of policies under the bootstrapped set $\shielded{\boots}$.
By definition and \cref{lem:shieldedbaseline}, any policy $\pi \in \Pi_{\shielded{\boots}}$ is $\threshold$-safe.
First, we formalize what policy iteration with~\cref{eq:shield:spibb} amounts to.
In \shieldspibb, policy iteration $\pi'\gets\pi$ with \cref{eq:shield:spibb}, starting from $\shieldedbaseline$ converges to $\shieldedimproved$ when $\pi'=\pi$, formalized below:
\begin{lemma}
    \shieldspibb finds a $\Pi_{\shielded{\boots}}$-optimal policy for the shielded \mlmdp~$\shielded{\hat\mdp}$, i.e., the policy:
    $\shieldedimproved \in \argmax_{\pi \in  \Pi_{\shielded{\boots}}} 
    \performance{\pi,\shielded{\hat\mdp}}$.
\end{lemma}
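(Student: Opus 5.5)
The plan is to reduce the statement to the SPIBB optimality result of \citet{DBLP:conf/icml/LarocheTC19}: policy iteration with the update \cref{eq:spibb} converges to a $\Pi_{\boots}$-optimal policy for the \mlmdp. That result only relies on three facts. The MDP is finite. Every policy in the constrained class agrees with a fixed reference policy on a fixed set of state-action pairs. The free probability mass in each state is then placed greedily on the action with maximal $Q$. So I would check that \shieldspibb fits this template, with the \mlmdp replaced by $\shielded{\hat\mdp}$, $\baseline$ by $\shieldedbaseline$, and $\boots$ by $\boots_{\shield}$.

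\textbf{Step 1: well-definedness and closure.} For every $s$ the set $A_{\not\shielded{\boots}}(s)$ lies in $A_{\shield}(s)$, which is nonempty by the feasibility assumption. Hence the update \cref{eq:shield:spibb} uses only actions available in $\shielded{\hat\mdp}$. If $A_{\not\shielded{\boots}}(s)=\emptyset$, the free mass $\sum_{a'\in A_{\not\boots_{\shield}}(s)}\shieldedbaseline(a'\mid s)$ is zero and $\pi'(\cdot\mid s)=\shieldedbaseline(\cdot\mid s)$. The total mass of $\pi'(\cdot\mid s)$ is one, because the bootstrapped actions keep their baseline mass and the remainder goes to a single argmax action. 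By construction $\pi'\in\Pi_{\shielded{\boots}}$. Together with \cref{lem:shieldedbaseline}, this means every iterate is $\threshold$-safe and lives in $\shielded{\hat\mdp}$. So starting from $\shieldedbaseline\in\Pi_{\shielded{\boots}}$, all iterates remain in the class.

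\textbf{Step 2: the update is a constrained greedy step.} For fixed $\pi$, the update maximises $\sum_a \pi'(a\mid s)Q^\pi_{\shielded{\hat\mdp}}(s,a)$ over $\pi'(\cdot\mid s)$ subject to the constraints defining $\Pi_{\shielded{\boots}}$. This holds because the objective is linear in $\pi'(\cdot\mid s)$ and only the free mass can be moved. Since $\Pi_{\shielded{\boots}}$ is a product over states of such constraint sets, the standard policy improvement theorem applies: $V^{\pi'}_{\shielded{\hat\mdp}}\ge V^{\pi}_{\shielded{\hat\mdp}}$ pointwise. Values are therefore monotone. The greedy iterates range over a finite set, namely the deterministic choices of the argmax action per state, with ties broken consistently. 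Hence the iteration terminates with $\pi'=\pi=\shieldedimproved$.

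\textbf{Step 3: optimality at the fixed point (main obstacle).} I must show that a fixed point is optimal over the whole class $\Pi_{\shielded{\boots}}$. At the fixed point, $\shieldedimproved$ is greedy with respect to its own $Q$ within the constraints. I would then define the constrained Bellman optimality operator
\[
\mathcal{T}V(s)=\max_{\pi(\cdot\mid s)}\sum_a \pi(a\mid s)\Big(R(s,a)+\gamma\sum_{s'}\shielded{\T}(s'\mid s,a)V(s')\Big),
\]
with the maximum taken over constrained distributions. This operator is a $\gamma$-contraction, so it has a unique fixed point $V^\star$, and $V^\star\ge V^\pi$ for all $\pi\in\Pi_{\shielded{\boots}}$ by monotonicity. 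The greedy fixed point satisfies $V^{\shieldedimproved}=\mathcal{T}V^{\shieldedimproved}$, hence $V^{\shieldedimproved}=V^\star$. Evaluating at $s_0$ gives $\shieldedimproved\in\argmax_{\pi\in\Pi_{\shielded{\boots}}}\performance{\pi,\shielded{\hat\mdp}}$.

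The delicate points are two. First, the argmax in \cref{eq:shield:spibb} must range only over $A_{\not\shielded{\boots}}(s)$, so the "max" in $\mathcal{T}$ is exactly attained. Second, tie-breaking must be consistent so that cycling cannot occur. Both are handled as in \citet{DBLP:conf/icml/LarocheTC19}.
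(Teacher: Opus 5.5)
Your proposal is correct and takes essentially the same route as the paper. The paper justifies the lemma in one line, as a direct application of policy iteration to the constrained class $\Pi_{\shielded{\boots}}$ on $\shielded{\hat\mdp}$, citing Puterman and Laroche et al.; you fill in what it leaves implicit: closure of the iterates in the class (via feasibility, so that $A_{\shield}(s)\neq\emptyset$), the constrained greedy improvement step, termination under consistent tie-breaking, and optimality of the fixed point via the state-wise constrained Bellman optimality operator.
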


The above is a simple application of policy iteration to the set of shielded policies on the~\mlmdp~\citep{DBLP:books/wi/Puterman94,DBLP:conf/icml/LarocheTC19}.
Next, we formalize that \shieldspibb produces policies that adhere to the shield, i.e., are $\threshold$-safe.

\begin{lemma}[\shieldspibb is $\threshold$-safe]
    \label{lem:safespibb}
    \shieldspibb policies $\shieldedimproved \in \Pi$ are $\threshold$-safe in the true MDP $\mdp^*$ with probability
    at least $1 - \confshield $.
\end{lemma}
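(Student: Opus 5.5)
The argument has two parts. First, a purely deterministic step shows that the output of \shieldspibb is $\threshold$-safe with respect to the shield $A_{\shield}$, which is computed from the IMDP $\hat{\imdp}$. Second, \Cref{thm:highprobsafe} transfers this to the true MDP $\mdp^*$. The only randomness is in the dataset $\dataset$, through the event $\mdp^* \in \hat{\mdp}_{\intervals}$ of \Cref{prop:pac:imdp}. So we condition on that single event and never apply a union bound per state or per iteration.

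\emph{Step 1 (invariance of safety under the update).} We show by induction over the policy iteration that every iterate $\pi$ lies in $\Pi_{\shielded{\boots}}$ and is $\threshold$-safe. The base case is the starting policy $\shieldedbaseline$, which is $\threshold$-safe by \Cref{lem:shieldedbaseline}. For the inductive step, fix a state $s$ and apply \cref{eq:shield:spibb}. Any unsafe action $a \in \underline{A}_{\shield}(s)$ has $(s,a) \in \boots_{\shield}$ by the definition of $\boots_{\shield}$. It therefore receives $\shieldedbaseline(a \mid s)$, which is $0$. The argmax in \cref{eq:shield:spibb} ranges only over $A_{\not\shielded{\boots}}(s) \subseteq A_{\shield}(s)$, so the redistributed mass is placed on a safe action. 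All other actions get $0$. It remains to check that $\pi'(s)$ is a distribution. The bootstrapped mass plus the redistributed mass equals $\sum_a \shieldedbaseline(a\mid s) = 1$; in the edge case $A_{\not\shielded{\boots}}(s)=\emptyset$ the policy equals $\shieldedbaseline(s)$. Hence $\supp(\pi'(s)) \subseteq A_{\shield}(s)$. Since the fixed point $\shieldedimproved$ is one of these iterates, it satisfies \Cref{def:safepol}. Feasibility guarantees $A_{\shield}(s) \neq \emptyset$, so $\shieldedbaseline$ is well defined.

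\emph{Step 2 (transfer to $\mdp^*$).} Let $E$ be the event $\mdp^* \in \hat{\mdp}_{\intervals}$, which has probability at least $1-\confshield$ by \Cref{prop:pac:imdp}. On $E$, \Cref{lem:robustlb} and \Cref{lem:robustQ} give $\rrQ^*(s,a) \le \rQ^*(s,a)$ simultaneously for all $(s,a)$. The proof of \Cref{thm:highprobsafe} in effect uses exactly this event. Hence on $E$, every $a \in A_{\shield}(s)$ satisfies $\rQ^*(s,a) \ge \rrQ^*(s,a) > 1-\threshold$, so it is also $\threshold$-safe on $\mdp^*$. Together with Step 1, on $E$ the support of $\shieldedimproved(s)$ consists of actions that are $\threshold$-safe on $\mdp^*$ for every $s$. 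This gives the claim with probability at least $1-\confshield$.

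\emph{Main obstacle.} The main subtlety is the quantifier order. \Cref{thm:highprobsafe} is stated per state-action pair, and union-bounding over all pairs would degrade the confidence. The statement needs the guarantee uniformly, so I will stress that the underlying event $E$ is model-level. On $E$ the inequality holds for all pairs at once, and the confidence stays $1-\confshield$.

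A second subtlety is that $\shieldedimproved$ depends on $\dataset$. This is harmless, because Step 1 holds for every realization of $\dataset$: whatever policy is produced has support inside $A_{\shield}$, and $A_{\shield}$ is computed from the same dataset, so no additional probabilistic argument is needed.
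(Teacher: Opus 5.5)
Your proposal is correct and follows essentially the paper's argument: every iterate of the update stays in $\Pi_{\shielded{\boots}}$, whose members are $\threshold$-safe by construction of $\boots_{\shield}$ and \Cref{lem:shieldedbaseline}, and \Cref{thm:highprobsafe} (via the event $\mdp^* \in \hat{\mdp}_{\intervals}$ of \Cref{prop:pac:imdp}) transfers this safety to $\mdp^*$. You condition on that single model-level event, so that $\rrQ^*(s,a) \le \rQ^*(s,a)$ holds for all pairs at once without a union bound; this makes precise a quantifier step the paper leaves implicit when it invokes the per-pair statement of \Cref{thm:highprobsafe}.
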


\Cref{lem:safespibb} is with high probability, since the shield is based on the worst-case that accounts for the true MDP~$\mdp^*$ with a probability of at least $1-\confshield$, following \cref{thm:highprobsafe}.
In summary, \shieldspibb provides a practical algorithm that produces policies that adhere to the shield constructed from the dataset in~\Cref{sec:shielding}.
\shieldspibb shields the baseline policy and, by adding unsafe state-action pairs to the bootstrapped set, greedily optimizes the policy over shielded actions with respect to the action-values of the shielded~\mlmdp~$\shielded{\hat\mdp}$.

%% file: sections/04-experiments.tex
\section{Experimental Evaluation}
In this section, we evaluate the shielded SPI methods against their non-shielded counterparts using four benchmarks.
We aim to answer the following three questions in the context of SPI methods:
\begin{questionenum}[leftmargin=*]
    \item How does shielding affect the sample complexity and performance of the resulting policies over the size of the dataset?\label{q1} 
    \item How does shielding affect worst-case performance, in terms of the average of the worst runs (measured in 1\%-CVaR)?\label{q2}
    \item How does shielding affect the safety of the resulting policies?\label{q3}
\end{questionenum}

\subsection{Experimental setup}
\paragraph{Methods.}
In addition to SPIBB and SPIBB$_{\shield}$ from~\cref{sec:shield:spi}, we consider DUIPI~\citep{schneegassDuipi}.
DUIPI is an SPI method that addresses uncertainty by computing the variance of the estimate of the transition function and propagating it through the action-values.
DUIPI penalizes the value of actions with uncertain outcomes by subtracting variance.
In a similar fashion to \shieldspibb, we create a shielded version DUIPI$_{\shield}$ by restricting the actions that can be chosen and by initializing with the shielded baseline~$\shieldedbaseline$.
We provide more details on DUIPI and DUIPI$_{\shield}$ in~\cref{app:duipi}.

\paragraph{Infrastructure.}
All experiments were conducted on the CPU of a Mac Studio with an M1 Ultra (128 GB unified memory).
We build on the implementation of SPIBB and DUIPI by~\citet{DBLP:conf/icaart/SchollDOU22a} and use their experimental framework in Python as the basis for ours.
The code is available at: \url{anonymous.4open.science/r/shielded-spi}.
We use the probabilistic model checker \storm~\citep{DBLP:conf/cav/DehnertJK017} to build and analyze the IMDP used to construct the shield.

\paragraph{Baseline policies.}
The baseline policies $\baseline \in \Pi$ are generated in a mixture with $\epsilon \in [0,1]$ such that 
\(\baseline = \epsilon\heuristic + (1-\epsilon)\tilde{\pi}\)
, where $\heuristic \in \Pi$ is a \emph{heuristic policy} and $\tilde{\pi}\in\Pi$ is the \emph{uniform random policy} with $\forall(s,a) \in S\times A\colon \tilde{\pi}(a \mid s) = \nicefrac{1}{|A|}$.

\paragraph{Experiments.}
Performing one experiment consists of the following steps, resembling steps in~\Cref{fig:placeholder}.
First, we gather a dataset~$\dataset$ by executing the baseline policy~$\baseline$ on the true MDP~$\mdp^*$.
Second, we estimate the baseline policy~$\baseline$ from the data~\citep{DBLP:conf/atal/SimaoLC20} as described in~\cref{sec:est_baseline}.
Third, we run the methods described above, using the data and the (shielded) baseline policy to compute a new policy $\pi$.
Finally, we evaluate the performance $\performance{\pi, \mdp^*}$ of the resulting policy $\pi$ on the true MDP~$\mdp^*$.
Each experiment is performed $1000$ times, and we report both the average performance and the 1\%-CVaR performance, which is the average of the worst $10$ runs. 
Important hyperparameters are listed in~\cref{tab:parameters}.

\begin{table}[tb]
    \caption{Dimensions and hyperparameters of the benchmarks.
    We set $\gamma=0.95$ and $\confimdp=0.1$ across.
    }
       \centering
    \begin{tabular}{lrrrrrrrrrr}
    \toprule
      Benchmarks  & $|S|$ & $|A|$ & $\nwedge$ & $\threshold$         & $\kappa$ & $\epsilon$        
      \\ \midrule
    \textit{Random MDPs}                 & 50 & 4 &3          & 0.2              & 0.05 &0.5             \\
    \textit{Wet Chicken}         & 26 & 5         & 7          & 0.2              & 0.05    &0.05          \\
    \textit{Frozen Lake}            & 64 & 4     & 3        & 0.2              & 0.02        &0.5     \\
    \textit{Pacman}              & 118k & 4        & 3         & 0.01             & 0.01       &0.5      \\
    \bottomrule
    \end{tabular}
        \label{tab:parameters}
\end{table}

\paragraph{Benchmarks}
The \emph{Random MDPs} benchmark~\citep{DBLP:conf/pkdd/NadjahiLC19} consists of randomly generated MDPs with $|S| = 100$ states, where each state has four actions that lead to four different states with a random probability. 
The \emph{Wet Chicken} benchmark~\citep{DBLP:conf/icann/HansU09} models an agent controlling a boat on a river adjacent to a waterfall, where the objective is to stay as close to the edge of the waterfall without falling off. 
The \emph{Frozen Lake} benchmark~\citep{towers2024gymnasiumstandardinterfacereinforcement} involves navigating an $8 \times 8$ grid to reach a goal state while avoiding hazardous hole states. 
The above benchmarks used in SPI are small; therefore, we construct
the larger \emph{Pacman} benchmark, inspired by~\citet{DBLP:conf/aaai/AlshiekhBEKNT18}; a $7 \times 7$ maze where the agent, able to move in the four cardinal directions, starts in a corner and aims to reach a goal state while avoiding two randomly moving ghosts. 
We provide dimensions in~\cref{tab:parameters} and detailed descriptions in~\cref{app:benchmarks} of the supplemental material.

\begin{figure*}[tb]
    \providecommand\mathdefault[1]{#1}
    \everymath=\expandafter{\the\everymath\displaystyle}
    \renewcommand\sffamily{}
    \begin{minipage}{0.8\textwidth}
    \begin{subfigure}{0.49\textwidth}
        \centering
        \resizebox{\linewidth}{!}{\input{images/results_grid_plot-0}}
    \end{subfigure}
    \begin{subfigure}{0.49\textwidth}
        \centering
        \resizebox{\linewidth}{!}{\input{images/results_grid_plot-1}}
    \end{subfigure}
    \\
    \begin{subfigure}{0.49\textwidth}
        \centering
        \resizebox{\linewidth}{!}{\input{images/results_grid_plot-3}}
    \end{subfigure}
    \begin{subfigure}{0.49\textwidth}
        \centering
        \resizebox{\linewidth}{!}{\input{images/results_grid_plot-2}}
    \end{subfigure}
    \end{minipage}
    \begin{minipage}{0.19\textwidth}
        \resizebox{\linewidth}{!}{\input{images/legend-results_grid_plot}}
    \end{minipage}
    \label{fig:results}
    \caption{Performance $\rho(\pi, \mdp^*)$ on the true MDP $\mdp^*$ of the policies $\pi \in \Pi$ produced by the shielded ($\star$) and non-shielded ($\bullet$) SPI methods plotted against the number of trajectories in the dataset~$\dataset$. We report average~(solid) and 1\%-CVaR~(dotted) across runs. The $x$-axis is on a log scale, and error bars denote $95$\% confidence intervals. The $y$-axis is clipped for visibility purposes; full ranges are in~\cref{app:plots}.
    The (shielded) baseline and optimal performances on the true MDP~$\mdp^*$ are included for reference. 
    }
\end{figure*}

\subsection{Results and analysis}

\Cref{fig:results} shows the average and 1\%-CVaR performance of the shielded and non-shielded DUIPI and SPIBB methods plotted against the size of the dataset $\dataset$, using a log-scale for the $x$-axis depicting dataset size. 
Note that for the episodic Pacman, Frozen Lake, and Random MDPs benchmarks, the dataset size refers to the number of trajectories in $\dataset$. 
In contrast, for the non-episodic Wet Chicken benchmark, it refers to the number of transitions in the dataset. 
Additionally, we plot the performance of the optimal, baseline, and shielded baseline policies.
Below, we analyze the results as answers to the three research questions.

\paragraph{\ref{q1}: Data complexity}
Both in terms of average and worst-case performance, \textbf{the shielded methods consistently outperform their non-shielded counterparts in low-data regimes}, particularly for smaller datasets. 
While both the shielded and non-shielded methods tend to converge similarly with sufficiently large datasets, the shielded versions achieve this level of performance with smaller datasets and generally converge faster.

The effectiveness of the shield on small datasets can be attributed to the fact that learning algorithms often struggle when data is scarce, as they lack sufficient information to accurately estimate the value of actions. 
This lack of data increases the likelihood that the agent will select unsafe or suboptimal actions. 
The shield mitigates this risk by ensuring that the optimization is applied only to safe actions, resulting in fewer unsafe outcomes, as discussed in the answer to Q3, which in turn leads to a higher overall reward. 
Moreover, SPI methods fall back to the baseline policy for state-action pairs with limited data. 
By shielding the baseline policy, the fallback will always select safe actions, providing an additional layer of protection and resulting in better overall performance with less data. 
In summary, \textbf{shielded SPI outperforms standard SPI with low-data regimes, generally converging faster.}

\paragraph{\ref{q2}: Worst-case performance.}
The 1\%-CVaR results show that the shielded methods generally outperform their non-shielded counterparts in terms of worst-case performance (see also \Cref{fig:results_cvar} in \Cref{app:plots} for the full range 1\%-CVaR results). 
Generally, we observe that the gains from the shield are larger in terms of 1\%-CVaR, indicating that the shield makes the algorithms more robust in the worst 1\% of runs, potentially when the datasets are low-quality.

The effect of the shield on the 1\%-CVaR is especially pronounced in the Pacman benchmark, which is also the largest. 
Here, the 1\%-CVaR performance of the shielded methods significantly outperforms their non-shielded counterparts.
Furthermore, we emphasize that the 1\%-CVaR performance of the shielded methods is superior to the average performance of the non-shielded methods across all dataset sizes.
Thus, the worst runs of the method with the shield are still better than the average of the runs without the shield.
For the other three benchmarks, we observe a general improvement in the shielded methods when comparing 1\%-CVaR. 
In summary, we conclude that \textbf{our shielding approach significantly improves worst-case outcomes as evidenced by 1\%-CVaR performance}.

\paragraph{\ref{q3}: Safety.}
In the four benchmarks used in our experiments, all unsafe states are associated with a negative reward, while all safe states have a reward of zero or greater. 
This means that if a policy's performance is negative, it has a positive probability of reaching an unsafe state.

In three of the four benchmarks, we observe that the average performance of shielded SPI methods remains above zero.
In the random MDPs benchmark, the shielded methods, as well as the shielded baseline, start with a performance slightly below zero.
Here, many actions inherently carry a non-negligible probability of eventually leading to a trap, making it challenging, if not impossible, to identify a policy that completely guarantees safety.
Still, we observe that the shielded SPI methods achieve a better performance.

The Frozen Lake benchmark demonstrates that the shield helps the SPI methods avoid unsafe actions.
Here, shielded methods almost never produce policies that yield negative rewards, indicating that they consistently avoid unsafe behaviors, regardless of dataset size.
With small datasets, the shielded methods converge to conservative but safe policies that yield a reward of just above zero.
While this policy does not reach the goal state, and thus does not earn a positive reward, it also avoids any unsafe states.
In contrast, the non-shielded methods learn policies that not only fail to reach the goal state but also contain unsafe actions.
Even with larger datasets, shielded methods retain this advantage, achieving higher rewards because they carry a lower risk of unsafe outcomes than their non-shielded counterparts. 
Achieving the same performance as the non-shielded methods with larger datasets demonstrates that, in our benchmarks, the shield does not hinder the computation of near-optimal policies.
In summary, \textbf{the shielded methods achieve safer policies in low-data regimes while remaining competitive with larger datasets.}

%% file: images/legend-results_grid_plot.tex
\begingroup%
\makeatletter%
\begin{pgfpicture}%
\pgfpathrectangle{\pgfpointorigin}{\pgfqpoint{2.705000in}{4.420833in}}%
\pgfusepath{use as bounding box, clip}%
\begin{pgfscope}%
\pgfsetbuttcap%
\pgfsetmiterjoin%
\definecolor{currentfill}{rgb}{1.000000,1.000000,1.000000}%
\pgfsetfillcolor{currentfill}%
\pgfsetlinewidth{0.000000pt}%
\definecolor{currentstroke}{rgb}{1.000000,1.000000,1.000000}%
\pgfsetstrokecolor{currentstroke}%
\pgfsetdash{}{0pt}%
\pgfpathmoveto{\pgfqpoint{-0.000000in}{0.000000in}}%
\pgfpathlineto{\pgfqpoint{2.705000in}{0.000000in}}%
\pgfpathlineto{\pgfqpoint{2.705000in}{4.420833in}}%
\pgfpathlineto{\pgfqpoint{-0.000000in}{4.420833in}}%
\pgfpathlineto{\pgfqpoint{-0.000000in}{0.000000in}}%
\pgfpathclose%
\pgfusepath{fill}%
\end{pgfscope}%
\begin{pgfscope}%
\definecolor{textcolor}{rgb}{0.150000,0.150000,0.150000}%
\pgfsetstrokecolor{textcolor}%
\pgfsetfillcolor{textcolor}%
\pgftext[x=0.718975in,y=4.140550in,left,base]{\color{textcolor}{\sffamily\fontsize{15.000000}{18.000000}\selectfont\catcode`\^=\active\def^{\ifmmode\sp\else\^{}\fi}\catcode`\%=\active\def
\end{pgfscope}%
\begin{pgfscope}%
\pgfsetbuttcap%
\pgfsetroundjoin%
\pgfsetlinewidth{1.204500pt}%
\definecolor{currentstroke}{rgb}{0.850980,0.372549,0.007843}%
\pgfsetstrokecolor{currentstroke}%
\pgfsetdash{}{0pt}%
\pgfpathmoveto{\pgfqpoint{0.343975in}{3.803514in}}%
\pgfpathlineto{\pgfqpoint{0.343975in}{4.011847in}}%
\pgfusepath{stroke}%
\end{pgfscope}%
\begin{pgfscope}%
\pgfsetbuttcap%
\pgfsetroundjoin%
\definecolor{currentfill}{rgb}{0.850980,0.372549,0.007843}%
\pgfsetfillcolor{currentfill}%
\pgfsetlinewidth{1.003750pt}%
\definecolor{currentstroke}{rgb}{0.850980,0.372549,0.007843}%
\pgfsetstrokecolor{currentstroke}%
\pgfsetdash{}{0pt}%
\pgfsys@defobject{currentmarker}{\pgfqpoint{-0.055556in}{-0.000000in}}{\pgfqpoint{0.055556in}{0.000000in}}{%
\pgfpathmoveto{\pgfqpoint{0.055556in}{-0.000000in}}%
\pgfpathlineto{\pgfqpoint{-0.055556in}{0.000000in}}%
\pgfusepath{stroke,fill}%
}%
\begin{pgfscope}%
\pgfsys@transformshift{0.343975in}{3.803514in}%
\pgfsys@useobject{currentmarker}{}%
\end{pgfscope}%
\end{pgfscope}%
\begin{pgfscope}%
\pgfsetbuttcap%
\pgfsetroundjoin%
\definecolor{currentfill}{rgb}{0.850980,0.372549,0.007843}%
\pgfsetfillcolor{currentfill}%
\pgfsetlinewidth{1.003750pt}%
\definecolor{currentstroke}{rgb}{0.850980,0.372549,0.007843}%
\pgfsetstrokecolor{currentstroke}%
\pgfsetdash{}{0pt}%
\pgfsys@defobject{currentmarker}{\pgfqpoint{-0.055556in}{-0.000000in}}{\pgfqpoint{0.055556in}{0.000000in}}{%
\pgfpathmoveto{\pgfqpoint{0.055556in}{-0.000000in}}%
\pgfpathlineto{\pgfqpoint{-0.055556in}{0.000000in}}%
\pgfusepath{stroke,fill}%
}%
\begin{pgfscope}%
\pgfsys@transformshift{0.343975in}{4.011847in}%
\pgfsys@useobject{currentmarker}{}%
\end{pgfscope}%
\end{pgfscope}%
\begin{pgfscope}%
\pgfsetroundcap%
\pgfsetroundjoin%
\pgfsetlinewidth{1.204500pt}%
\definecolor{currentstroke}{rgb}{0.850980,0.372549,0.007843}%
\pgfsetstrokecolor{currentstroke}%
\pgfsetdash{}{0pt}%
\pgfpathmoveto{\pgfqpoint{0.135642in}{3.907681in}}%
\pgfpathlineto{\pgfqpoint{0.552309in}{3.907681in}}%
\pgfusepath{stroke}%
\end{pgfscope}%
\begin{pgfscope}%
\pgfsetbuttcap%
\pgfsetroundjoin%
\definecolor{currentfill}{rgb}{0.850980,0.372549,0.007843}%
\pgfsetfillcolor{currentfill}%
\pgfsetlinewidth{1.003750pt}%
\definecolor{currentstroke}{rgb}{0.850980,0.372549,0.007843}%
\pgfsetstrokecolor{currentstroke}%
\pgfsetdash{}{0pt}%
\pgfsys@defobject{currentmarker}{\pgfqpoint{-0.027778in}{-0.027778in}}{\pgfqpoint{0.027778in}{0.027778in}}{%
\pgfpathmoveto{\pgfqpoint{0.000000in}{-0.027778in}}%
\pgfpathcurveto{\pgfqpoint{0.007367in}{-0.027778in}}{\pgfqpoint{0.014433in}{-0.024851in}}{\pgfqpoint{0.019642in}{-0.019642in}}%
\pgfpathcurveto{\pgfqpoint{0.024851in}{-0.014433in}}{\pgfqpoint{0.027778in}{-0.007367in}}{\pgfqpoint{0.027778in}{0.000000in}}%
\pgfpathcurveto{\pgfqpoint{0.027778in}{0.007367in}}{\pgfqpoint{0.024851in}{0.014433in}}{\pgfqpoint{0.019642in}{0.019642in}}%
\pgfpathcurveto{\pgfqpoint{0.014433in}{0.024851in}}{\pgfqpoint{0.007367in}{0.027778in}}{\pgfqpoint{0.000000in}{0.027778in}}%
\pgfpathcurveto{\pgfqpoint{-0.007367in}{0.027778in}}{\pgfqpoint{-0.014433in}{0.024851in}}{\pgfqpoint{-0.019642in}{0.019642in}}%
\pgfpathcurveto{\pgfqpoint{-0.024851in}{0.014433in}}{\pgfqpoint{-0.027778in}{0.007367in}}{\pgfqpoint{-0.027778in}{0.000000in}}%
\pgfpathcurveto{\pgfqpoint{-0.027778in}{-0.007367in}}{\pgfqpoint{-0.024851in}{-0.014433in}}{\pgfqpoint{-0.019642in}{-0.019642in}}%
\pgfpathcurveto{\pgfqpoint{-0.014433in}{-0.024851in}}{\pgfqpoint{-0.007367in}{-0.027778in}}{\pgfqpoint{0.000000in}{-0.027778in}}%
\pgfpathlineto{\pgfqpoint{0.000000in}{-0.027778in}}%
\pgfpathclose%
\pgfusepath{stroke,fill}%
}%
\begin{pgfscope}%
\pgfsys@transformshift{0.343975in}{3.907681in}%
\pgfsys@useobject{currentmarker}{}%
\end{pgfscope}%
\end{pgfscope}%
\begin{pgfscope}%
\definecolor{textcolor}{rgb}{0.150000,0.150000,0.150000}%
\pgfsetstrokecolor{textcolor}%
\pgfsetfillcolor{textcolor}%
\pgftext[x=0.718975in,y=3.834764in,left,base]{\color{textcolor}{\sffamily\fontsize{15.000000}{18.000000}\selectfont\catcode`\^=\active\def^{\ifmmode\sp\else\^{}\fi}\catcode`\%=\active\def
\end{pgfscope}%
\begin{pgfscope}%
\pgfsetbuttcap%
\pgfsetroundjoin%
\pgfsetlinewidth{1.204500pt}%
\definecolor{currentstroke}{rgb}{0.105882,0.619608,0.466667}%
\pgfsetstrokecolor{currentstroke}%
\pgfsetdash{}{0pt}%
\pgfpathmoveto{\pgfqpoint{0.343975in}{3.497728in}}%
\pgfpathlineto{\pgfqpoint{0.343975in}{3.706061in}}%
\pgfusepath{stroke}%
\end{pgfscope}%
\begin{pgfscope}%
\pgfsetbuttcap%
\pgfsetroundjoin%
\definecolor{currentfill}{rgb}{0.105882,0.619608,0.466667}%
\pgfsetfillcolor{currentfill}%
\pgfsetlinewidth{1.003750pt}%
\definecolor{currentstroke}{rgb}{0.105882,0.619608,0.466667}%
\pgfsetstrokecolor{currentstroke}%
\pgfsetdash{}{0pt}%
\pgfsys@defobject{currentmarker}{\pgfqpoint{-0.055556in}{-0.000000in}}{\pgfqpoint{0.055556in}{0.000000in}}{%
\pgfpathmoveto{\pgfqpoint{0.055556in}{-0.000000in}}%
\pgfpathlineto{\pgfqpoint{-0.055556in}{0.000000in}}%
\pgfusepath{stroke,fill}%
}%
\begin{pgfscope}%
\pgfsys@transformshift{0.343975in}{3.497728in}%
\pgfsys@useobject{currentmarker}{}%
\end{pgfscope}%
\end{pgfscope}%
\begin{pgfscope}%
\pgfsetbuttcap%
\pgfsetroundjoin%
\definecolor{currentfill}{rgb}{0.105882,0.619608,0.466667}%
\pgfsetfillcolor{currentfill}%
\pgfsetlinewidth{1.003750pt}%
\definecolor{currentstroke}{rgb}{0.105882,0.619608,0.466667}%
\pgfsetstrokecolor{currentstroke}%
\pgfsetdash{}{0pt}%
\pgfsys@defobject{currentmarker}{\pgfqpoint{-0.055556in}{-0.000000in}}{\pgfqpoint{0.055556in}{0.000000in}}{%
\pgfpathmoveto{\pgfqpoint{0.055556in}{-0.000000in}}%
\pgfpathlineto{\pgfqpoint{-0.055556in}{0.000000in}}%
\pgfusepath{stroke,fill}%
}%
\begin{pgfscope}%
\pgfsys@transformshift{0.343975in}{3.706061in}%
\pgfsys@useobject{currentmarker}{}%
\end{pgfscope}%
\end{pgfscope}%
\begin{pgfscope}%
\pgfsetroundcap%
\pgfsetroundjoin%
\pgfsetlinewidth{1.204500pt}%
\definecolor{currentstroke}{rgb}{0.105882,0.619608,0.466667}%
\pgfsetstrokecolor{currentstroke}%
\pgfsetdash{}{0pt}%
\pgfpathmoveto{\pgfqpoint{0.135642in}{3.601895in}}%
\pgfpathlineto{\pgfqpoint{0.552309in}{3.601895in}}%
\pgfusepath{stroke}%
\end{pgfscope}%
\begin{pgfscope}%
\pgfsetbuttcap%
\pgfsetbeveljoin%
\definecolor{currentfill}{rgb}{0.105882,0.619608,0.466667}%
\pgfsetfillcolor{currentfill}%
\pgfsetlinewidth{1.003750pt}%
\definecolor{currentstroke}{rgb}{0.105882,0.619608,0.466667}%
\pgfsetstrokecolor{currentstroke}%
\pgfsetdash{}{0pt}%
\pgfsys@defobject{currentmarker}{\pgfqpoint{-0.052836in}{-0.044945in}}{\pgfqpoint{0.052836in}{0.055556in}}{%
\pgfpathmoveto{\pgfqpoint{0.000000in}{0.055556in}}%
\pgfpathlineto{\pgfqpoint{-0.012473in}{0.017168in}}%
\pgfpathlineto{\pgfqpoint{-0.052836in}{0.017168in}}%
\pgfpathlineto{\pgfqpoint{-0.020182in}{-0.006557in}}%
\pgfpathlineto{\pgfqpoint{-0.032655in}{-0.044945in}}%
\pgfpathlineto{\pgfqpoint{-0.000000in}{-0.021220in}}%
\pgfpathlineto{\pgfqpoint{0.032655in}{-0.044945in}}%
\pgfpathlineto{\pgfqpoint{0.020182in}{-0.006557in}}%
\pgfpathlineto{\pgfqpoint{0.052836in}{0.017168in}}%
\pgfpathlineto{\pgfqpoint{0.012473in}{0.017168in}}%
\pgfpathlineto{\pgfqpoint{0.000000in}{0.055556in}}%
\pgfpathclose%
\pgfusepath{stroke,fill}%
}%
\begin{pgfscope}%
\pgfsys@transformshift{0.343975in}{3.601895in}%
\pgfsys@useobject{currentmarker}{}%
\end{pgfscope}%
\end{pgfscope}%
\begin{pgfscope}%
\definecolor{textcolor}{rgb}{0.150000,0.150000,0.150000}%
\pgfsetstrokecolor{textcolor}%
\pgfsetfillcolor{textcolor}%
\pgftext[x=0.718975in,y=3.528978in,left,base]{\color{textcolor}{\sffamily\fontsize{15.000000}{18.000000}\selectfont\catcode`\^=\active\def^{\ifmmode\sp\else\^{}\fi}\catcode`\%=\active\def
\end{pgfscope}%
\begin{pgfscope}%
\pgfsetbuttcap%
\pgfsetroundjoin%
\pgfsetlinewidth{1.204500pt}%
\definecolor{currentstroke}{rgb}{0.905882,0.160784,0.541176}%
\pgfsetstrokecolor{currentstroke}%
\pgfsetdash{}{0pt}%
\pgfpathmoveto{\pgfqpoint{0.343975in}{3.191942in}}%
\pgfpathlineto{\pgfqpoint{0.343975in}{3.400276in}}%
\pgfusepath{stroke}%
\end{pgfscope}%
\begin{pgfscope}%
\pgfsetbuttcap%
\pgfsetroundjoin%
\definecolor{currentfill}{rgb}{0.905882,0.160784,0.541176}%
\pgfsetfillcolor{currentfill}%
\pgfsetlinewidth{1.003750pt}%
\definecolor{currentstroke}{rgb}{0.905882,0.160784,0.541176}%
\pgfsetstrokecolor{currentstroke}%
\pgfsetdash{}{0pt}%
\pgfsys@defobject{currentmarker}{\pgfqpoint{-0.055556in}{-0.000000in}}{\pgfqpoint{0.055556in}{0.000000in}}{%
\pgfpathmoveto{\pgfqpoint{0.055556in}{-0.000000in}}%
\pgfpathlineto{\pgfqpoint{-0.055556in}{0.000000in}}%
\pgfusepath{stroke,fill}%
}%
\begin{pgfscope}%
\pgfsys@transformshift{0.343975in}{3.191942in}%
\pgfsys@useobject{currentmarker}{}%
\end{pgfscope}%
\end{pgfscope}%
\begin{pgfscope}%
\pgfsetbuttcap%
\pgfsetroundjoin%
\definecolor{currentfill}{rgb}{0.905882,0.160784,0.541176}%
\pgfsetfillcolor{currentfill}%
\pgfsetlinewidth{1.003750pt}%
\definecolor{currentstroke}{rgb}{0.905882,0.160784,0.541176}%
\pgfsetstrokecolor{currentstroke}%
\pgfsetdash{}{0pt}%
\pgfsys@defobject{currentmarker}{\pgfqpoint{-0.055556in}{-0.000000in}}{\pgfqpoint{0.055556in}{0.000000in}}{%
\pgfpathmoveto{\pgfqpoint{0.055556in}{-0.000000in}}%
\pgfpathlineto{\pgfqpoint{-0.055556in}{0.000000in}}%
\pgfusepath{stroke,fill}%
}%
\begin{pgfscope}%
\pgfsys@transformshift{0.343975in}{3.400276in}%
\pgfsys@useobject{currentmarker}{}%
\end{pgfscope}%
\end{pgfscope}%
\begin{pgfscope}%
\pgfsetroundcap%
\pgfsetroundjoin%
\pgfsetlinewidth{1.204500pt}%
\definecolor{currentstroke}{rgb}{0.905882,0.160784,0.541176}%
\pgfsetstrokecolor{currentstroke}%
\pgfsetdash{}{0pt}%
\pgfpathmoveto{\pgfqpoint{0.135642in}{3.296109in}}%
\pgfpathlineto{\pgfqpoint{0.552309in}{3.296109in}}%
\pgfusepath{stroke}%
\end{pgfscope}%
\begin{pgfscope}%
\pgfsetbuttcap%
\pgfsetroundjoin%
\definecolor{currentfill}{rgb}{0.905882,0.160784,0.541176}%
\pgfsetfillcolor{currentfill}%
\pgfsetlinewidth{1.003750pt}%
\definecolor{currentstroke}{rgb}{0.905882,0.160784,0.541176}%
\pgfsetstrokecolor{currentstroke}%
\pgfsetdash{}{0pt}%
\pgfsys@defobject{currentmarker}{\pgfqpoint{-0.027778in}{-0.027778in}}{\pgfqpoint{0.027778in}{0.027778in}}{%
\pgfpathmoveto{\pgfqpoint{0.000000in}{-0.027778in}}%
\pgfpathcurveto{\pgfqpoint{0.007367in}{-0.027778in}}{\pgfqpoint{0.014433in}{-0.024851in}}{\pgfqpoint{0.019642in}{-0.019642in}}%
\pgfpathcurveto{\pgfqpoint{0.024851in}{-0.014433in}}{\pgfqpoint{0.027778in}{-0.007367in}}{\pgfqpoint{0.027778in}{0.000000in}}%
\pgfpathcurveto{\pgfqpoint{0.027778in}{0.007367in}}{\pgfqpoint{0.024851in}{0.014433in}}{\pgfqpoint{0.019642in}{0.019642in}}%
\pgfpathcurveto{\pgfqpoint{0.014433in}{0.024851in}}{\pgfqpoint{0.007367in}{0.027778in}}{\pgfqpoint{0.000000in}{0.027778in}}%
\pgfpathcurveto{\pgfqpoint{-0.007367in}{0.027778in}}{\pgfqpoint{-0.014433in}{0.024851in}}{\pgfqpoint{-0.019642in}{0.019642in}}%
\pgfpathcurveto{\pgfqpoint{-0.024851in}{0.014433in}}{\pgfqpoint{-0.027778in}{0.007367in}}{\pgfqpoint{-0.027778in}{0.000000in}}%
\pgfpathcurveto{\pgfqpoint{-0.027778in}{-0.007367in}}{\pgfqpoint{-0.024851in}{-0.014433in}}{\pgfqpoint{-0.019642in}{-0.019642in}}%
\pgfpathcurveto{\pgfqpoint{-0.014433in}{-0.024851in}}{\pgfqpoint{-0.007367in}{-0.027778in}}{\pgfqpoint{0.000000in}{-0.027778in}}%
\pgfpathlineto{\pgfqpoint{0.000000in}{-0.027778in}}%
\pgfpathclose%
\pgfusepath{stroke,fill}%
}%
\begin{pgfscope}%
\pgfsys@transformshift{0.343975in}{3.296109in}%
\pgfsys@useobject{currentmarker}{}%
\end{pgfscope}%
\end{pgfscope}%
\begin{pgfscope}%
\definecolor{textcolor}{rgb}{0.150000,0.150000,0.150000}%
\pgfsetstrokecolor{textcolor}%
\pgfsetfillcolor{textcolor}%
\pgftext[x=0.718975in,y=3.223192in,left,base]{\color{textcolor}{\sffamily\fontsize{15.000000}{18.000000}\selectfont\catcode`\^=\active\def^{\ifmmode\sp\else\^{}\fi}\catcode`\%=\active\def
\end{pgfscope}%
\begin{pgfscope}%
\pgfsetbuttcap%
\pgfsetroundjoin%
\pgfsetlinewidth{1.204500pt}%
\definecolor{currentstroke}{rgb}{0.458824,0.439216,0.701961}%
\pgfsetstrokecolor{currentstroke}%
\pgfsetdash{}{0pt}%
\pgfpathmoveto{\pgfqpoint{0.343975in}{2.886156in}}%
\pgfpathlineto{\pgfqpoint{0.343975in}{3.094490in}}%
\pgfusepath{stroke}%
\end{pgfscope}%
\begin{pgfscope}%
\pgfsetbuttcap%
\pgfsetroundjoin%
\definecolor{currentfill}{rgb}{0.458824,0.439216,0.701961}%
\pgfsetfillcolor{currentfill}%
\pgfsetlinewidth{1.003750pt}%
\definecolor{currentstroke}{rgb}{0.458824,0.439216,0.701961}%
\pgfsetstrokecolor{currentstroke}%
\pgfsetdash{}{0pt}%
\pgfsys@defobject{currentmarker}{\pgfqpoint{-0.055556in}{-0.000000in}}{\pgfqpoint{0.055556in}{0.000000in}}{%
\pgfpathmoveto{\pgfqpoint{0.055556in}{-0.000000in}}%
\pgfpathlineto{\pgfqpoint{-0.055556in}{0.000000in}}%
\pgfusepath{stroke,fill}%
}%
\begin{pgfscope}%
\pgfsys@transformshift{0.343975in}{2.886156in}%
\pgfsys@useobject{currentmarker}{}%
\end{pgfscope}%
\end{pgfscope}%
\begin{pgfscope}%
\pgfsetbuttcap%
\pgfsetroundjoin%
\definecolor{currentfill}{rgb}{0.458824,0.439216,0.701961}%
\pgfsetfillcolor{currentfill}%
\pgfsetlinewidth{1.003750pt}%
\definecolor{currentstroke}{rgb}{0.458824,0.439216,0.701961}%
\pgfsetstrokecolor{currentstroke}%
\pgfsetdash{}{0pt}%
\pgfsys@defobject{currentmarker}{\pgfqpoint{-0.055556in}{-0.000000in}}{\pgfqpoint{0.055556in}{0.000000in}}{%
\pgfpathmoveto{\pgfqpoint{0.055556in}{-0.000000in}}%
\pgfpathlineto{\pgfqpoint{-0.055556in}{0.000000in}}%
\pgfusepath{stroke,fill}%
}%
\begin{pgfscope}%
\pgfsys@transformshift{0.343975in}{3.094490in}%
\pgfsys@useobject{currentmarker}{}%
\end{pgfscope}%
\end{pgfscope}%
\begin{pgfscope}%
\pgfsetroundcap%
\pgfsetroundjoin%
\pgfsetlinewidth{1.204500pt}%
\definecolor{currentstroke}{rgb}{0.458824,0.439216,0.701961}%
\pgfsetstrokecolor{currentstroke}%
\pgfsetdash{}{0pt}%
\pgfpathmoveto{\pgfqpoint{0.135642in}{2.990323in}}%
\pgfpathlineto{\pgfqpoint{0.552309in}{2.990323in}}%
\pgfusepath{stroke}%
\end{pgfscope}%
\begin{pgfscope}%
\pgfsetbuttcap%
\pgfsetbeveljoin%
\definecolor{currentfill}{rgb}{0.458824,0.439216,0.701961}%
\pgfsetfillcolor{currentfill}%
\pgfsetlinewidth{1.003750pt}%
\definecolor{currentstroke}{rgb}{0.458824,0.439216,0.701961}%
\pgfsetstrokecolor{currentstroke}%
\pgfsetdash{}{0pt}%
\pgfsys@defobject{currentmarker}{\pgfqpoint{-0.052836in}{-0.044945in}}{\pgfqpoint{0.052836in}{0.055556in}}{%
\pgfpathmoveto{\pgfqpoint{0.000000in}{0.055556in}}%
\pgfpathlineto{\pgfqpoint{-0.012473in}{0.017168in}}%
\pgfpathlineto{\pgfqpoint{-0.052836in}{0.017168in}}%
\pgfpathlineto{\pgfqpoint{-0.020182in}{-0.006557in}}%
\pgfpathlineto{\pgfqpoint{-0.032655in}{-0.044945in}}%
\pgfpathlineto{\pgfqpoint{-0.000000in}{-0.021220in}}%
\pgfpathlineto{\pgfqpoint{0.032655in}{-0.044945in}}%
\pgfpathlineto{\pgfqpoint{0.020182in}{-0.006557in}}%
\pgfpathlineto{\pgfqpoint{0.052836in}{0.017168in}}%
\pgfpathlineto{\pgfqpoint{0.012473in}{0.017168in}}%
\pgfpathlineto{\pgfqpoint{0.000000in}{0.055556in}}%
\pgfpathclose%
\pgfusepath{stroke,fill}%
}%
\begin{pgfscope}%
\pgfsys@transformshift{0.343975in}{2.990323in}%
\pgfsys@useobject{currentmarker}{}%
\end{pgfscope}%
\end{pgfscope}%
\begin{pgfscope}%
\definecolor{textcolor}{rgb}{0.150000,0.150000,0.150000}%
\pgfsetstrokecolor{textcolor}%
\pgfsetfillcolor{textcolor}%
\pgftext[x=0.718975in,y=2.917406in,left,base]{\color{textcolor}{\sffamily\fontsize{15.000000}{18.000000}\selectfont\catcode`\^=\active\def^{\ifmmode\sp\else\^{}\fi}\catcode`\%=\active\def
\end{pgfscope}%
\begin{pgfscope}%
\definecolor{textcolor}{rgb}{0.150000,0.150000,0.150000}%
\pgfsetstrokecolor{textcolor}%
\pgfsetfillcolor{textcolor}%
\pgftext[x=0.718975in,y=2.305835in,left,base]{\color{textcolor}{\sffamily\fontsize{15.000000}{18.000000}\selectfont\catcode`\^=\active\def^{\ifmmode\sp\else\^{}\fi}\catcode`\%=\active\def
\end{pgfscope}%
\begin{pgfscope}%
\pgfsetroundcap%
\pgfsetroundjoin%
\pgfsetlinewidth{1.204500pt}%
\definecolor{currentstroke}{rgb}{0.000000,0.000000,0.000000}%
\pgfsetstrokecolor{currentstroke}%
\pgfsetdash{}{0pt}%
\pgfpathmoveto{\pgfqpoint{0.135642in}{2.072966in}}%
\pgfpathlineto{\pgfqpoint{0.343975in}{2.072966in}}%
\pgfpathlineto{\pgfqpoint{0.552309in}{2.072966in}}%
\pgfusepath{stroke}%
\end{pgfscope}%
\begin{pgfscope}%
\definecolor{textcolor}{rgb}{0.150000,0.150000,0.150000}%
\pgfsetstrokecolor{textcolor}%
\pgfsetfillcolor{textcolor}%
\pgftext[x=0.718975in,y=2.000049in,left,base]{\color{textcolor}{\sffamily\fontsize{15.000000}{18.000000}\selectfont\catcode`\^=\active\def^{\ifmmode\sp\else\^{}\fi}\catcode`\%=\active\def
\end{pgfscope}%
\begin{pgfscope}%
\pgfsetbuttcap%
\pgfsetroundjoin%
\pgfsetlinewidth{1.204500pt}%
\definecolor{currentstroke}{rgb}{0.000000,0.000000,0.000000}%
\pgfsetstrokecolor{currentstroke}%
\pgfsetdash{{1.200000pt}{1.980000pt}}{0.000000pt}%
\pgfpathmoveto{\pgfqpoint{0.135642in}{1.767180in}}%
\pgfpathlineto{\pgfqpoint{0.343975in}{1.767180in}}%
\pgfpathlineto{\pgfqpoint{0.552309in}{1.767180in}}%
\pgfusepath{stroke}%
\end{pgfscope}%
\begin{pgfscope}%
\definecolor{textcolor}{rgb}{0.150000,0.150000,0.150000}%
\pgfsetstrokecolor{textcolor}%
\pgfsetfillcolor{textcolor}%
\pgftext[x=0.718975in,y=1.694263in,left,base]{\color{textcolor}{\sffamily\fontsize{15.000000}{18.000000}\selectfont\catcode`\^=\active\def^{\ifmmode\sp\else\^{}\fi}\catcode`\%=\active\def
\end{pgfscope}%
\begin{pgfscope}%
\definecolor{textcolor}{rgb}{0.150000,0.150000,0.150000}%
\pgfsetstrokecolor{textcolor}%
\pgfsetfillcolor{textcolor}%
\pgftext[x=0.718975in,y=1.082691in,left,base]{\color{textcolor}{\sffamily\fontsize{15.000000}{18.000000}\selectfont\catcode`\^=\active\def^{\ifmmode\sp\else\^{}\fi}\catcode`\%=\active\def
\end{pgfscope}%
\begin{pgfscope}%
\pgfsetbuttcap%
\pgfsetroundjoin%
\pgfsetlinewidth{2.007500pt}%
\definecolor{currentstroke}{rgb}{0.000000,0.000000,1.000000}%
\pgfsetstrokecolor{currentstroke}%
\pgfsetdash{{7.400000pt}{3.200000pt}}{0.000000pt}%
\pgfpathmoveto{\pgfqpoint{0.135642in}{0.849822in}}%
\pgfpathlineto{\pgfqpoint{0.343975in}{0.849822in}}%
\pgfpathlineto{\pgfqpoint{0.552309in}{0.849822in}}%
\pgfusepath{stroke}%
\end{pgfscope}%
\begin{pgfscope}%
\definecolor{textcolor}{rgb}{0.150000,0.150000,0.150000}%
\pgfsetstrokecolor{textcolor}%
\pgfsetfillcolor{textcolor}%
\pgftext[x=0.718975in,y=0.776906in,left,base]{\color{textcolor}{\sffamily\fontsize{15.000000}{18.000000}\selectfont\catcode`\^=\active\def^{\ifmmode\sp\else\^{}\fi}\catcode`\%=\active\def
\end{pgfscope}%
\begin{pgfscope}%
\pgfsetbuttcap%
\pgfsetroundjoin%
\pgfsetlinewidth{2.007500pt}%
\definecolor{currentstroke}{rgb}{0.839216,0.643137,0.000000}%
\pgfsetstrokecolor{currentstroke}%
\pgfsetdash{{2.000000pt}{3.300000pt}}{0.000000pt}%
\pgfpathmoveto{\pgfqpoint{0.135642in}{0.544036in}}%
\pgfpathlineto{\pgfqpoint{0.343975in}{0.544036in}}%
\pgfpathlineto{\pgfqpoint{0.552309in}{0.544036in}}%
\pgfusepath{stroke}%
\end{pgfscope}%
\begin{pgfscope}%
\definecolor{textcolor}{rgb}{0.150000,0.150000,0.150000}%
\pgfsetstrokecolor{textcolor}%
\pgfsetfillcolor{textcolor}%
\pgftext[x=0.718975in,y=0.471120in,left,base]{\color{textcolor}{\sffamily\fontsize{15.000000}{18.000000}\selectfont\catcode`\^=\active\def^{\ifmmode\sp\else\^{}\fi}\catcode`\%=\active\def
\end{pgfscope}%
\begin{pgfscope}%
\pgfsetbuttcap%
\pgfsetroundjoin%
\pgfsetlinewidth{2.007500pt}%
\definecolor{currentstroke}{rgb}{0.000000,0.000000,0.000000}%
\pgfsetstrokecolor{currentstroke}%
\pgfsetdash{{12.800000pt}{3.200000pt}{2.000000pt}{3.200000pt}}{0.000000pt}%
\pgfpathmoveto{\pgfqpoint{0.135642in}{0.238251in}}%
\pgfpathlineto{\pgfqpoint{0.343975in}{0.238251in}}%
\pgfpathlineto{\pgfqpoint{0.552309in}{0.238251in}}%
\pgfusepath{stroke}%
\end{pgfscope}%
\begin{pgfscope}%
\definecolor{textcolor}{rgb}{0.150000,0.150000,0.150000}%
\pgfsetstrokecolor{textcolor}%
\pgfsetfillcolor{textcolor}%
\pgftext[x=0.718975in,y=0.165334in,left,base]{\color{textcolor}{\sffamily\fontsize{15.000000}{18.000000}\selectfont\catcode`\^=\active\def^{\ifmmode\sp\else\^{}\fi}\catcode`\%=\active\def
\end{pgfscope}%
\end{pgfpicture}%
\makeatother%
\endgroup%

%% file: sections/05_discussion_conclusion.tex
\subsection{Discussion}
The results demonstrate that probabilistic shielding can effectively enhance offline SPI methods. 
By constraining learned policies to avoid actions that violate reach-avoid specifications with high probability, our approach yields better average and worst-case performance, particularly in settings where data is scarce.
These findings provide positive answers to \ref{q1} and \ref{q2}.
Additionally, the results also show that the shielded SPI methods learn safer policies with fewer unsafe outcomes, positively answering \ref{q3}.

The performance of the shield depends on the quality of the estimated IMDP~$\hat\imdp$ learned from data.
In cases where the dataset is too sparse or the interval estimates are poorly calibrated, the transition intervals might be too wide or inaccurate to be useful in shielding.
Since the shield utilizes probabilities computed from the worst-case transition function within the IMDP, wide intervals may result in an overly conservative shield.
On the other hand, intervals that are too narrow may not capture the actual transition dynamics and therefore fail to identify genuine risks.

In some benchmarks, the shielded baseline performance declines for larger datasets.
This happens because we use a fixed safety threshold~$\threshold$ for all dataset sizes.
As the dataset size increases, the estimated intervals for each transition typically get smaller.
Because we use the worst-case transitions within the uncertainty set, a larger dataset generally results in a lower probability of violating the reach-avoid specification. 
Thus, more actions may be found to be $\threshold$-safe, and the shielded baseline incurs lower performance.
It is challenging to find a threshold value that works for both small datasets with more conservative probabilities and larger datasets with more accurate probabilities.
Since the shield is most effective for smaller datasets, we selected a threshold value that works well for these dataset sizes, and kept it consistent across all dataset sizes.

Finally, we note that a limitation of the shield's design is that there may be a trade-off between safety and optimality.
By construction, the shield restricts the action space based on worst-case reach-avoid probabilities. 
By constraining action choices based on worst-case estimates, the shield may block actions that offer high rewards but also entail a small risk of failure.
This conservative behavior can lead to suboptimal policies, particularly when the safety threshold and confidence parameters are poorly calibrated.

\section{Related work}
Here, we provide an overview of related literature in the fields of (probabilistic) shielding (in online RL) and SPI (in offline RL).
\paragraph{Shielding}
\citet{DBLP:conf/aaai/AlshiekhBEKNT18} 
synthesize shields by constructing a \emph{safety game} that combines a \emph{linear temporal logic} specification with an abstract model of the environment. 
Solving the game prevents unsafe actions while allowing the agent to learn optimally. 
\citet{DBLP:conf/concur/0001KJSB20} 
calculate probabilistic shields, shielding actions that are relative to the optimal safety. 
\citet{DBLP:conf/aaai/Carr0JT23} calculate reach-avoid shields, yet in the context of deep RL and under partial observability. 
\citet{DBLP:conf/amcc/PrangerKTD0B21}
synthesize a shield adaptively, using separate cost functions for performance and interference, and calculate a shield based on a weighted cost objective.
\citet{court_probabilistic_2025} guarantee probabilistic safety by constructing a continuous MDP that embeds the probabilistic safety constraints and apply deep reinforcement learning to find a safe policy.
Notable extensions of shielding include hybrid~\citep{DBLP:conf/vecos/BrorholtJLLS23,DBLP:conf/vecos/BrorholtHLS24} and multi-agent systems~\citep{DBLP:conf/ifaamas/BrorholtL025,DBLP:conf/icml/0002ZCSSF24,DBLP:conf/atal/MelcerAT24}.
All the above assume some form of a \emph{safety-relevant} model, where the part of the MDP relevant to safety is assumed to be fully known.
In contrast, \citet{DBLP:conf/isola/TapplerPKMBL22} compute shields for exploration of an unknown MDP using automata learning.
Furthermore, in model-free (deep) RL,
\citet{DBLP:conf/aaai/HeLB22} introduce \emph{latent shielding}, simulating future trajectories in a latent space and estimating the likelihood of an action leading to a safety violation.
\citet{DBLP:conf/ecai/GoodallB23} build on this idea and introduce \emph{approximate model-based shielding}, 
to look-ahead for safety.
Yet, all the above literature focuses on shielding in online scenarios, where the shields limit the states the agent can visit during exploration.  
Instead, we apply shields to constrain the space of policies that can be learned in offline RL.

\paragraph{Safe Policy Improvement}
In SPI~\citep{DBLP:conf/icml/PirottaRPC13} for offline RL, prior work has focused on improving an (assumed to be safe) baseline policy with high confidence. 
HCPI~\citep{DBLP:conf/aaai/ThomasTG15} uses a high-confidence lower bound on the expected return of a candidate policy using importance sampling estimates.
~\citet{DBLP:conf/nips/GhavamzadehPC16} 
formalize and provide approximations to the robust policy improvement problem, also introducing a method that 
penalizes the reward function based on how often a state-action pair occurs in the dataset.
\citet{DBLP:conf/nips/ChandakJTWT20} study SPI in non-stationary MDPs.
In the context of SPIBB,
\citet{DBLP:conf/atal/SimaoLC20} estimate the baseline policy from data, and \citet{DBLP:conf/nips/SatijaTPL21} study multiple objectives, incorporating constraints on expected costs. 
Furthermore, \citet{DBLP:conf/ijcai/WienhoftSSDB023} address the sample complexity of SPI, \citet{DBLP:journals/corr/abs-2507-15532} exploit parametric and graphical structure and \citet{DBLP:conf/icml/Castellini0ZSFS23} scale up using Monte Carlo tree search.
\citet{DBLP:conf/pkdd/NadjahiLC19} introduce Soft-SPIBB, relaxing constraints on policy search, and
\citet{DBLP:conf/icaart/SchollDOU22a} identify and propose solutions for limitations of Soft-SPIBB.

\section{Conclusion}
We presented a novel method for integrating probabilistic shielding into offline RL within the SPI framework. 
We construct a shield from offline data and enforce it onto the baseline policy and during policy improvement, such that
with high probability, actions that are likely to lead to unsafe outcomes are disabled.
Thereby, we circumvent assumptions regarding (1) the safety of the baseline policy that is common in the SPI literature, and (2) the availability of a sufficient safety-relevant model, which is common in the shielding literature.
Empirical results demonstrate that shielded SPI methods improve both average and worst-case performance across the four benchmark environments.
Moreover, they consistently outperform their non-shielded counterparts when data is scarce. 
In future work, we will bridge our computation of the shield from a dataset to the online RL setting.

%% file: sections/A1-complete_graphs.tex
\onecolumn

\section{Estimating the baseline policy}
\label{sec:est_baseline}
When using SPI algorithms, it is typically assumed that the baseline policy is given.
However, \citet{DBLP:conf/atal/SimaoLC20} previously demonstrated that this assumption can be relaxed for SPIBB, allowing an estimated baseline to be used in its place.
The estimation is done by using the occurrences of states $N_\dataset(s)$ and state-action pairs $N_\dataset(s,a)$ from the given dataset of trajectories $\dataset$.
For each state, we normalize these counts to obtain a probability distribution over actions. 
If a state has not been visited in the dataset, we default to a uniform distribution over actions.
The estimated baseline policy $\hat{\pi}_b$ is:
\begin{equation}
    \hat{\pi}^b(a \mid s) = 
    \begin{cases}
    \nicefrac{N_{\dataset}(s,a)}{N_{\dataset}(s)} & \text{if } N_{\dataset}(s) > 0, \\
    \nicefrac{1}{|A|} & \text{otherwise.}
    \end{cases}
\end{equation}
In the paper, we denote the baseline policy by $\baseline$ as if it were known.
Yet, we do not make this assumption in practice and instead use the estimated baseline $\hat{\pi}^b \approx \baseline$ in our experiments.

\section{Point estimates}
\label{app:map}
PAC learning is a standard method for estimating transition functions of MDPs. 
The process begins by obtaining a point estimate $\hat{T}$, e.g., via maximum likelihood or otherwise.
In our implementation, we use maximum a posteriori probability (MAP) estimates. 
Suppose we have the state-action pair $(s,a)$ with $m$ successor states, and each successor state $s'_i$ is observed $k_i$ times. 
Without loss of generality, for brevity we assume here that each $(s,a)$ pair has $m$ successor states.
We use the Dirichlet distribution 
$\dir(\T(\cdot\mid s, a) \mid \alpha_1, \dots, \alpha_m) \propto \prod_{i=1}^{m} \T(s'_i \mid s, a)^{\alpha_i - 1}$ 
as a conjugate prior to the multinomial likelihood. 
Because it is a conjugate prior, we do not need to compute the posterior distribution explicitly.
Instead, it is sufficient to update the parameters of the prior Dirichlet distribution. 
This way, we get the posterior distribution $\dir(\T(\cdot \mid s, a) \mid \alpha_1 + k_1, \dots, \alpha_m + k_m)$, where $k_i$ is the number of times successor state $i \in \{1, \ldots , m\}$ has been observed. 
The MAP estimate can now be found by taking the mode:

\begin{equation}
    \tilde{\T}(s'_i\mid s, a) = \frac{\alpha_i + k_i- 1}{\sum_{j=1}^{m}\left(\alpha_j + k_j \right) - m}
\end{equation}
This estimated transition function is likely to differ from the actual transition function. 
Yet the point estimate does not account for the uncertainty in the estimate.
Therefore, as is common, we turn this point estimate into a confidence interval using Hoeffding’s inequality~\cite{Hoeffding}. 

\section{Complete Plots}
\label{app:plots}
For some of the plots shown in \cref{fig:results}, the y-axis values have been clipped for better visualization:  
\begin{itemize}
    \item The Wet Chicken y-axis is clipped at -60,  
    \item The Frozen Lake y-axis is clipped at -5, and  
    \item The Random MDPs y-axis is clipped at -4.  
\end{itemize}

\begin{table}[tb]
    \caption{The dimensions of the benchmarks and full range of hyperparameters used in the experiments.}
       \centering
    \begin{tabular}{lrrrrrrrrrr}
    \toprule
    Benchmarks  & $|S|$ & $|A|$ & $\nwedge$ & $\uncertaintyweight$ & $\epsilon$ & $\theta$         & $\kappa$         & $\errorrate$      & $\alpha$ & $\probmargin$\\ \midrule
    \textit{Random MDPs}                 & 50 & 4 &3         & 0.1                  & 0.5        & 0.2              & 0.05             & 0.1               & 5       & $1 \times 10^{-8}$   \\
    \textit{Wet Chicken}         & 26 & 5         & 7         & 0.05                 & 0.05        & 0.2              & 0.5              & 0.1               & 2       & $1 \times 10^{-8}$    \\
    \textit{Frozen Lake}            & 64 & 4     & 3         & 1                    & 0.5        & 0.2              & 0.02             & 0.1               & 5     & $1 \times 10^{-8}$      \\ 
    \textit{Pacman}              & 118k & 4        & 3         & 1                    & 0.5        & 0.01             & 0.01             & 0.1               & 10      & $1 \times 10^{-8}$    \\
    \bottomrule
    \end{tabular}
        \label{app:tab:parameters}
\end{table}

We opted for clipping because the CVaR performance was very low in these cases, which would have otherwise made the plots difficult to read and interpret.

Included in this appendix are two figures that plot the complete data. 
\cref{fig:results_avg} shows the average performance with error bars representing 95\% confidence intervals. \cref{fig:results_cvar} shows the unclipped 1\%-CVaR data.

\begin{figure}[tbp]
    \providecommand\mathdefault[1]{#1}
    \everymath=\expandafter{\the\everymath\displaystyle}
    \renewcommand\sffamily{}
\resizebox{\textwidth}{!}{\input{images/avg}}
\caption{Full-range plots of the average performance of the shielded and non-shielded methods plotted against the number of trajectories in the dataset~$\dataset$ (log-scale), including error bars representing 95\% confidence intervals.}\label{fig:results_avg}
\end{figure}
\begin{figure}[bp]
    \providecommand\mathdefault[1]{#1}
    \everymath=\expandafter{\the\everymath\displaystyle}
    \renewcommand\sffamily{}
\resizebox{\textwidth}{!}{\input{images/cvar}}
\caption{Full-range plots of the 1\%-CVaR performance of the shielded and non-shielded methods plotted against the number of trajectories in the dataset~$\dataset$ (log-scale). Unlike the clipped versions shown in the main text, these plots display the entire data range}\label{fig:results_cvar}

\end{figure}

\section{Benchmarks}
\label{app:benchmarks}
We use four different environments to benchmark the performance of our approach. 
Each of these benchmarks, along with its baseline policy and safety specifications, will be explained in this subsection.
The baseline policies are estimated from data, as explained in \Cref{sec:est_baseline}.

\paragraph{Random MDPs}
Our first experimental environment is a slightly modified version of the Random MDPs benchmark, as introduced in Nadjah et al.~\cite{DBLP:conf/pkdd/NadjahiLC19}. 
It consists of a randomly generated MDP with $|S| = 50$ states.
Each state offers four actions, which can lead to four different states, with probabilities determined randomly. 
The agent starts in $s_0$, and the goal state is selected to minimize the optimal value function.
The environment also contains five randomly selected sink states, or ``traps''.
These trapped states are chosen such that all remain reachable from the initial state. 
This modification was necessary as the original benchmark lacked unsafe states.
All transitions yield a reward of 0 except for those leading into the final state or a trap state. 
A new random MDP is generated for each iteration.

The safety specification is defined as 
$\varphi = (\neg \text{trap } \mathsf{U} \text{ goal})$, requiring the agent to reach the goal while avoiding trap states.
Here, the heuristic policy $\heuristic$ used to construct the baseline policy is the optimal policy $\pi^* \in \Pi$.

\paragraph{Wet Chicken}
The Wet Chicken benchmark~\cite{DBLP:conf/icann/HansU09,DBLP:conf/icaart/SchollDOU22a} models an agent controlling a boat on a river adjacent to a waterfall.
We follow the description of \citet{DBLP:conf/icaart/SchollDOU22a}.
The objective is to stay as close to the edge of the waterfall without falling off. 
This river is represented by a 5 $\times$ 5 grid world. 
If the x position of the boat ever exceeds 5, it falls over the waterfall, its position resets to $(0,0)$, and a large negative reward is obtained.
Otherwise, the reward obtained in each subsequent timestep is equal to the $x$-position of the boat. 
The agent can perform a set of predefined actions: it can drift with no movement, corresponding to an action vector of $(a_x, a_y) = (0, 0)$, hold, which involves slightly paddling away from the waterfall with $(a_x, a_y) = (-1, 0)$, paddle back, representing a stronger motion away from the waterfall, with $(a_x, a_y) = (-2, 0)$, steer right, with $(a_x, a_y) = (0, 1)$, or steer left, with $(a_x, a_y) = (0, -1)$.
The boat’s movement is also influenced by the stream $v_t = y_t \frac{3}{5}$ and turbulence $b_t = 3.5 - v_t$, where the turbulence direction $\tau_t \in [-1,1]$ is sampled uniformly at random each timestep.
The resulting dynamics are~$(x_{t+1}, y_{t+1}) = \text{round}(x_t+a_x+v_t+\tau_tb_t), \text{round}(x_t+a_y)$.
The safety specification is $\varphi =(\neg \text{waterfall } \mathsf{U} \text{ edge})$, requiring the agent to reach the states adjacent to the waterfall without falling.

\paragraph{Frozen Lake}
We employ a modified version of the Frozen Lake environment from the Gymnasium library~\cite{towers2024gymnasiumstandardinterfacereinforcement}.
The task involves navigating an $8 \times 8$ grid to reach a goal state while avoiding hazardous hole (sink) states. 
Reaching such a state results in a negative reward, and reaching the goal gives a positive reward.
The agent can move in the four cardinal directions, but due to the slippery surface, there is a chance of deviating from the intended direction at a perpendicular angle.

The safety specification is defined as $\varphi=(\neg \text{hole } \mathsf{U} \text{ goal})$, requiring the agent to reach the goal without first falling into a hole.

The heuristic baseline policy $\pi_H$ selects downward and rightward actions with equal probability.
The final baseline policy $\pi_b$ is formed by mixing $\pi_H$ with the uniformly random policy $\tilde{\pi}$, weighted by parameter $\epsilon$.

\paragraph{Simplified Pacman}
\begin{figure}
    \centering
    \begin{tikzpicture}[scale=0.5]
        \foreach \x in {0,...,7} {
            \draw (\x,0) -- (\x,7);
        }
        \foreach \y in {0,...,7} {
            \draw (0,\y) -- (7,\y);
        }

        \foreach \x/\y in {
            1/1, 1/2, 1/4, 1/5,
            2/2, 2/3, 2/4,
            3/0, 3/2, 3/6,
            4/4,
            5/0, 5/1, 5/2, 5/4, 5/5
        } {
            \fill[black] (\x,\y) rectangle ++(1,1);
        }
        \fill[LimeGreen] (6,6) rectangle ++(1,1);
        \fill[yellow] (0.5,0.5) circle (0.3);
        \draw (0.5,0.5) circle (0.3);

        \fill[red] (6.5,6.5) circle (0.3);  
        \draw (6.5,6.5) circle (0.3);

        \fill[red] (3.5,3.5) circle (0.3);  
        \draw (3.5,3.5) circle (0.3);

    \end{tikzpicture}
    \caption{Starting position of the pacman environment}
    \label{fig:pacman7x7}
\end{figure}

We use an environment that models a simplified Pacman game, inspired by Alshiekh et al.~\cite{DBLP:conf/aaai/AlshiekhBEKNT18}. 
The environment is a $7 \times 7$ grid maze where the agent starts in the bottom-left corner and aims to reach the top-right goal state. 
The agent can move in four cardinal directions: up, right, down, and left. 
Two ghost agents also move randomly throughout the maze.
If the agent attempts to move into a wall, its position is reset to the initial state. 
If the agent collides with a wall, it is reset to the initial state. 
A collision with a ghost causes the agent to be eaten, resulting in a negative reward and terminating the episode.
See \cref{fig:pacman7x7} for a visual representation of the environment.

The state space size is $(7^2)^3 = 117649$, as the positions of the agent and two ghosts must be tracked.
Using this environment, we demonstrate the scalability of our approach to large state spaces.
The safety specification is $\varphi =(\neg \text{eaten } \mathsf{U} \text{ goal})$, requiring the agent to reach the goal while avoiding ghosts.
The heuristic baseline policy $\pi_H$ selects upward and rightward actions with equal probability, and disallows actions that would result in collisions with walls.
The final baseline policy $\pi_b$ is a mixture of $\pi_H$ and the uniformly random policy $\tilde{\pi}$, weighted by parameter $\epsilon$.

\input{sections/duipi}
\label{app:duipi}

Rather than using a full covariance-based formulation, which is computationally expensive for large state spaces, DUIPI employs an approximate algorithm that utilizes only the diagonal of the covariance matrix. 
We assume independence between components.
\citet{schneegassDuipi} found that this approximation can be made for larger MDPs with more than 18 states and 4 actions, which is the case for all environments used in this paper.
When neglecting correlations, we can estimate the variance of a transformation $f$ on vector $X$ using~\cite{schneegassDuipi,DBLP:conf/icaart/SchollDOU22a}:

\begin{equation}
    \var(f(X)) \approx \sum_i \left( \frac{\partial f}{\partial X_i} \right)^2 \var(X_i) 
\end{equation}

When estimating the uncertainty of the action-value function $Q^{\pi}_{\hat{\mdp}}$, we must account for the variance introduced by $\T$ and $R$, as well as the variance propagated through $V^{\pi}_{\mdp}$.
We can therefore estimate the variance of $Q^{\pi}_{\hat{\mdp}}$ using~\cite{schneegassDuipi,DBLP:conf/icaart/SchollDOU22a}:

\begin{align}
\operatorname{Var}(Q^{\pi}_{\hat{\mdp}}(s, a)) &\approx\ 
 \sum_{s'} \gamma^2 \hat{\T}(s' \mid s,a)^2 \operatorname{Var}(V^{\pi}_{\hat{\mdp}}(s')) \notag \\
 &+ \sum_{s'} \left( \hat{R}(s,a,s') + \gamma V^{\pi}_{\hat{\mdp}}(s') \right)^2 \operatorname{Var}(\hat{\T}(s' \mid s,a))  \\
 &+ \sum_{s'} \hat{\T}(s' \mid s,a)^2 \operatorname{Var}(\hat{R}(s,a,s'))\notag
\end{align}
Each term corresponds to a distinct source of uncertainty.

%% file: sections/duipi.tex
\subsection{DUIPI}
\label{subsec:DUIPI}
The \emph{diagonal approximation of uncertainty incorporating policy iteration} (DUIPI)~\cite{schneegassDuipi} algorithm penalizes the reward of state-action pairs with high uncertainty by calculating the variance. 

In DUIPI, the variance of the action value $\var(Q^{\pi}_{\hat{M}}(s,a))$ is computed by considering the variance of the estimated transition function $\hat{\T}$.
In the PE step, this variance penalizes the expected reward:
    \(
    U(s,a) = Q^{\pi}_{\hat{M}}(s,a) -  \uncertaintyweight \sqrt{\var(Q^{\pi}_{\hat{M}}(s,a))},
    \)
where $\uncertaintyweight$ is a hyperparameter that determines the degree to which the variance influences the uncertainty.
We provide details on the computation of $U$ and $\hat{\T}$ in \cref{app:duipi}.
The resulting probabilistic guarantee is that with probability $1 - F(\zeta)$, the true action-value $Q^\pi_M$ exceeds the value $U$:
\begin{equation}
    \Pr(Q_M^\pi(s,a) > U(s,a)) = 1 - F(\zeta),
\end{equation}
where $F(\zeta) = \Pr(Z \leq \zeta)$ is the cumulative density function for a standard normal distributed $Z$.
Here, it is assumed that the action-value function is normally distributed, and that covariance between state-action pairs is neglible.
We refer to~\citet{schneegassDuipi} and \citet{DBLP:conf/icaart/SchollDOU22a} for a more extensive discussion.

During policy iteration, DUIPI updates the baseline policy by iteratively increasing the probability of the action with the highest penalized action-value~$U$.
Actions not selected as best are adjusted accordingly to maintain a valid probability distribution.
The rate of change decreases as the iteration count increases.
Formally, at policy iteration step $t\in\mathbb{N}$, we update the policy $\pi'$, using $a^*_{U}(s) = \argmax_{a\in A}U(s,a)$, as:
\begin{equation}
    \pi'(a\mid s)=
    \begin{cases}
        \min\left(\pi(a\mid s) + \nicefrac{1}{t}, 1\right) & \text{if } a \in a^*_{U}(s),\\[8pt]
        \dfrac{\max\left(1-\pi(a^*_{U}(s) \mid s)-\nicefrac{1}{t}, 0\right)}{1-\pi(a^*_{U}(s) \mid s)} & \text{otherwise.}
    \end{cases}
\end{equation}
As it proved more performant, we opted for the Bayesian version of DUIPI, estimating $\hat{\T}$ with Dirichlets, and, instead of randomly initializing the initial policy used in the policy iteration procedure, we used the baseline policy $\baseline$ as an initial guess.

\paragraph{Shielded-DUIPI} 
For shielded-DUIPI, we amend the action-value function by assigning the minimum reward to any unsafe state-action pair.
Recall $U$ from \cref{subsec:DUIPI}.
Then, this results in the shielded value $Q_{\shield}$, with:
\begin{equation}
    U_{\shield}(s, a) =
    \begin{cases}
     U(s,a) & \text{if } (s,a) \in SA_{\shield}, \\[10pt]
    -\infty
    & \text{otherwise.}
    \end{cases}  
\end{equation}
Thus, unsafe actions cannot be selected.
In this case, we use the shielded baseline policy $\shieldedbaseline$ as the initial guess instead of the original baseline policy $\baseline$.

%% file: references.bib
@article{DBLP:journals/corr/abs-2005-01643,
  author       = {Sergey Levine and
                  Aviral Kumar and
                  George Tucker and
                  Justin Fu},
  title        = {Offline Reinforcement Learning: Tutorial, Review, and Perspectives
                  on Open Problems},
  journal      = {CoRR},
  volume       = {abs/2005.01643},
  year         = {2020}
}

@inproceedings{DBLP:conf/aaai/HeLB22,
  author       = {Chloe He and
                  Borja G. Le{\'{o}}n and
                  Francesco Belardinelli},
  title        = {Do Androids Dream of Electric Fences? Safety-Aware Reinforcement Learning
                  with Latent Shielding},
  booktitle    = {SafeAI@AAAI},
  series       = {{CEUR} Workshop Proceedings},
  volume       = {3087},
  publisher    = {CEUR-WS.org},
  year         = {2022}
}

@inproceedings{DBLP:conf/ecai/GoodallB23,
  author       = {Alexander W. Goodall and
                  Francesco Belardinelli},
  title        = {Approximate Model-Based Shielding for Safe Reinforcement Learning},
  booktitle    = {{ECAI}},
  series       = {Frontiers in Artificial Intelligence and Applications},
  volume       = {372},
  pages        = {883--890},
  publisher    = {{IOS} Press},
  year         = {2023}
}

@inproceedings{DBLP:conf/ecml/ErnstGW03,
  author       = {Damien Ernst and
                  Pierre Geurts and
                  Louis Wehenkel},
  title        = {Iteratively Extending Time Horizon Reinforcement Learning},
  booktitle    = {{ECML}},
  series       = {Lecture Notes in Computer Science},
  volume       = {2837},
  pages        = {96--107},
  publisher    = {Springer},
  year         = {2003}
}

@inproceedings{DBLP:conf/icml/LarocheTC19,
  author       = {Romain Laroche and
                  Paul Trichelair and
                  Remi Tachet des Combes},
  title        = {Safe Policy Improvement with Baseline Bootstrapping},
  booktitle    = {{ICML}},
  series       = {Proceedings of Machine Learning Research},
  volume       = {97},
  pages        = {3652--3661},
  publisher    = {{PMLR}},
  year         = {2019}
}

@article{cacm:shields,
    author = {Bettina Koenighofer and Roderick Bloem and Nils Jansen and Sebastian Junges and Ste
fan Pranger},
    title = {Shields for safe reinforcement learning},
    journal = {CACM},
    year = {2025},
    notes = {(to appear)}
}

@inproceedings{DBLP:conf/vecos/BrorholtJLLS23,
  author       = {Asger Horn Brorholt and
                  Peter Gj{\o}l Jensen and
                  Kim Guldstrand Larsen and
                  Florian Lorber and
                  Christian Schilling},
  title        = {Shielded Reinforcement Learning for Hybrid Systems},
  booktitle    = {AISoLA},
  series       = {Lecture Notes in Computer Science},
  volume       = {14380},
  pages        = {33--54},
  publisher    = {Springer},
  year         = {2023}
}

@inproceedings{DBLP:conf/vecos/BrorholtHLS24,
  author       = {Asger Horn Brorholt and
                  Andreas Holck H{\o}eg{-}Petersen and
                  Kim Guldstrand Larsen and
                  Christian Schilling},
  title        = {Efficient Shield Synthesis via State-Space Transformation},
  booktitle    = {AISoLA},
  series       = {Lecture Notes in Computer Science},
  volume       = {15217},
  pages        = {206--224},
  publisher    = {Springer},
  year         = {2024}
}

@inproceedings{DBLP:conf/ifaamas/BrorholtL025,
  author       = {Asger Horn Brorholt and
                  Kim Guldstrand Larsen and
                  Christian Schilling},
  title        = {Compositional Shielding and Reinforcement Learning for Multi-Agent
                  Systems},
  booktitle    = {{AAMAS}},
  pages        = {399--407},
  publisher    = {International Foundation for Autonomous Agents and Multiagent Systems
                  / {ACM}},
  year         = {2025}
}

@inproceedings{DBLP:conf/icml/0002ZCSSF24,
  author       = {Federico Bianchi and
                  Edoardo Zorzi and
                  Alberto Castellini and
                  Thiago D. Sim{\~{a}}o and
                  Matthijs T. J. Spaan and
                  Alessandro Farinelli},
  title        = {Scalable Safe Policy Improvement for Factored Multi-Agent MDPs},
  booktitle    = {{ICML}},
  publisher    = {OpenReview.net},
  year         = {2024}
}

@inproceedings{DBLP:conf/atal/MelcerAT24,
  author       = {Daniel Melcer and
                  Christopher Amato and
                  Stavros Tripakis},
  title        = {Shield Decentralization for Safe Reinforcement Learning in General
                  Partially Observable Multi-Agent Environments},
  booktitle    = {{AAMAS}},
  pages        = {2384--2386},
  publisher    = {International Foundation for Autonomous Agents and Multiagent Systems
                  / {ACM}},
  year         = {2024}
}

@inproceedings{DBLP:conf/nips/SatijaTPL21,
  author       = {Harsh Satija and
                  Philip S. Thomas and
                  Joelle Pineau and
                  Romain Laroche},
  title        = {Multi-Objective {SPIBB:} Seldonian Offline Policy Improvement with
                  Safety Constraints in Finite MDPs},
  booktitle    = {NeurIPS},
  pages        = {2004--2017},
  year         = {2021}
}

@inproceedings{DBLP:conf/nips/ChandakJTWT20,
  author       = {Yash Chandak and
                  Scott M. Jordan and
                  Georgios Theocharous and
                  Martha White and
                  Philip S. Thomas},
  title        = {Towards Safe Policy Improvement for Non-Stationary MDPs},
  booktitle    = {NeurIPS},
  year         = {2020}
}

@inproceedings{DBLP:conf/isola/TapplerPKMBL22,
  author       = {Martin Tappler and
                  Stefan Pranger and
                  Bettina K{\"{o}}nighofer and
                  Edi Muskardin and
                  Roderick Bloem and
                  Kim G. Larsen},
  title        = {Automata Learning Meets Shielding},
  booktitle    = {ISoLA {(1)}},
  series       = {Lecture Notes in Computer Science},
  volume       = {13701},
  pages        = {335--359},
  publisher    = {Springer},
  year         = {2022}
}

@inproceedings{DBLP:conf/icml/Castellini0ZSFS23,
  author       = {Alberto Castellini and
                  Federico Bianchi and
                  Edoardo Zorzi and
                  Thiago D. Sim{\~{a}}o and
                  Alessandro Farinelli and
                  Matthijs T. J. Spaan},
  title        = {Scalable Safe Policy Improvement via Monte Carlo Tree Search},
  booktitle    = {{ICML}},
  series       = {Proceedings of Machine Learning Research},
  volume       = {202},
  pages        = {3732--3756},
  publisher    = {{PMLR}},
  year         = {2023}
}

@inproceedings{DBLP:conf/nips/GhavamzadehPC16,
  author       = {Mohammad Ghavamzadeh and
                  Marek Petrik and
                  Yinlam Chow},
  title        = {Safe Policy Improvement by Minimizing Robust Baseline Regret},
  booktitle    = {{NIPS}},
  pages        = {2298--2306},
  year         = {2016}
}

@article{DBLP:journals/corr/abs-2507-15532,
  author       = {Kasper Engelen and
                  Guillermo A. P{\'{e}}rez and
                  Marnix Suilen},
  title        = {Data-Efficient Safe Policy Improvement Using Parametric Structure},
  journal      = {CoRR},
  volume       = {abs/2507.15532},
  year         = {2025}
}

@inproceedings{DBLP:conf/icml/PirottaRPC13,
  author       = {Matteo Pirotta and
                  Marcello Restelli and
                  Alessio Pecorino and
                  Daniele Calandriello},
  title        = {Safe Policy Iteration},
  booktitle    = {{ICML} {(3)}},
  series       = {{JMLR} Workshop and Conference Proceedings},
  volume       = {28},
  pages        = {307--315},
  publisher    = {JMLR.org},
  year         = {2013}
}

@inbook{schneegassDuipi,
author = {Schneegass, Daniel and Hans, Alexander and Udluft, Steffen},
year = {2010},
month = {08},
pages = {},
title = {Uncertainty in Reinforcement Learning - Awareness, Quantisation, and Control},
isbn = {978-953-307-104-6},
doi = {10.5772/10250}
}

@inproceedings{DBLP:conf/icaart/SchollDOU22a,
  author       = {Philipp Scholl and
                  Felix Dietrich and
                  Clemens Otte and
                  Steffen Udluft},
  title        = {Safe Policy Improvement Approaches and Their Limitations},
  booktitle    = {{ICAART} (Revised Selected Paper},
  series       = {Lecture Notes in Computer Science},
  volume       = {13786},
  pages        = {74--98},
  publisher    = {Springer},
  year         = {2022}
}

@article{Hoeffding,
 ISSN = {01621459, 1537274X},
 URL = {http://www.jstor.org/stable/2282952},
 author = {Wassily Hoeffding},
 journal = {Journal of the American Statistical Association},
 number = {301},
 pages = {13--30},
 publisher = {[American Statistical Association, Taylor & Francis, Ltd.]},
 title = {Probability Inequalities for Sums of Bounded Random Variables},
 urldate = {2025-02-13},
 volume = {58},
 year = {1963}
}

@inproceedings{DBLP:conf/pkdd/NadjahiLC19,
  author       = {Kimia Nadjahi and
                  Romain Laroche and
                  R{\'{e}}mi Tachet des Combes},
  title        = {Safe Policy Improvement with Soft Baseline Bootstrapping},
  booktitle    = {{ECML/PKDD} {(3)}},
  series       = {Lecture Notes in Computer Science},
  volume       = {11908},
  pages        = {53--68},
  publisher    = {Springer},
  year         = {2019}
}

@inproceedings{DBLP:conf/icann/HansU09,
  author       = {Alexander Hans and
                  Steffen Udluft},
  title        = {Efficient Uncertainty Propagation for Reinforcement Learning with
                  Limited Data},
  booktitle    = {{ICANN} {(1)}},
  series       = {Lecture Notes in Computer Science},
  volume       = {5768},
  pages        = {70--79},
  publisher    = {Springer},
  year         = {2009}
}

@inproceedings{DBLP:conf/aaai/AlshiekhBEKNT18,
  author       = {Mohammed Alshiekh and
                  Roderick Bloem and
                  R{\"{u}}diger Ehlers and
                  Bettina K{\"{o}}nighofer and
                  Scott Niekum and
                  Ufuk Topcu},
  title        = {Safe Reinforcement Learning via Shielding},
  booktitle    = {{AAAI}},
  pages        = {2669--2678},
  publisher    = {{AAAI} Press},
  year         = {2018}
}

@misc{towers2024gymnasiumstandardinterfacereinforcement,
      title={Gymnasium: A Standard Interface for Reinforcement Learning Environments}, 
      author={Mark Towers and Ariel Kwiatkowski and Jordan Terry and John U. Balis and Gianluca De Cola and Tristan Deleu and Manuel Goulão and Andreas Kallinteris and Markus Krimmel and Arjun KG and Rodrigo Perez-Vicente and Andrea Pierré and Sander Schulhoff and Jun Jet Tai and Hannah Tan and Omar G. Younis},
      year={2024},
      eprint={2407.17032},
      archivePrefix={arXiv},
      primaryClass={cs.LG},
      url={https://arxiv.org/abs/2407.17032}, 
}

@article{Bellman1957AMD,
  title={A Markovian Decision Process},
  author={Richard Bellman},
  journal={Indiana University Mathematics Journal},
  year={1957},
  volume={6},
  pages={679-684},
  url={https://api.semanticscholar.org/CorpusID:123329493}
}

@inproceedings{MWW25,
  author       = {Tobias Meggendorfer and
                  Maximilian Weininger and
                  Patrick Wienh{\"{o}}ft},
  title        = {What Are the Odds? Improving the foundations of Statistical Model Checking},
  booktitle    = {{QEST + FORMATS}},
  year         = {2025 (to appear, preprint at \url{https://arxiv.org/abs/2404.05424})}
}

@inproceedings{DBLP:conf/ijcai/WienhoftSSDB023,
  author       = {Patrick Wienh{\"{o}}ft and
                  Marnix Suilen and
                  Thiago D. Sim{\~{a}}o and
                  Clemens Dubslaff and
                  Christel Baier and
                  Nils Jansen},
  title        = {More for Less: Safe Policy Improvement with Stronger Performance Guarantees},
  booktitle    = {{IJCAI}},
  pages        = {4406--4415},
  publisher    = {ijcai.org},
  year         = {2023}
}

@book{model_checking_bible,
  author       = {Christel Baier and
                  Joost{-}Pieter Katoen},
  title        = {Principles of model checking},
  publisher    = {{MIT} Press},
  year         = {2008}
}

@inproceedings{DBLP:conf/concur/0001KJSB20,
  author       = {Nils Jansen and
                  Bettina K{\"{o}}nighofer and
                  Sebastian Junges and
                  Alex Serban and
                  Roderick Bloem},
  title        = {Safe Reinforcement Learning Using Probabilistic Shields (Invited Paper)},
  booktitle    = {{CONCUR}},
  series       = {LIPIcs},
  volume       = {171},
  pages        = {3:1--3:16},
  publisher    = {Schloss Dagstuhl - Leibniz-Zentrum f{\"{u}}r Informatik},
  year         = {2020}
}

@inproceedings{DBLP:conf/aaai/Carr0JT23,
  author       = {Steven Carr and
                  Nils Jansen and
                  Sebastian Junges and
                  Ufuk Topcu},
  title        = {Safe Reinforcement Learning via Shielding under Partial Observability},
  booktitle    = {{AAAI}},
  pages        = {14748--14756},
  publisher    = {{AAAI} Press},
  year         = {2023}
}

@inproceedings{DBLP:conf/amcc/PrangerKTD0B21,
  author       = {Stefan Pranger and
                  Bettina K{\"{o}}nighofer and
                  Martin Tappler and
                  Martin Deixelberger and
                  Nils Jansen and
                  Roderick Bloem},
  title        = {Adaptive Shielding under Uncertainty},
  booktitle    = {{ACC}},
  pages        = {3467--3474},
  publisher    = {{IEEE}},
  year         = {2021}
}

@inproceedings{DBLP:conf/aaai/ThomasTG15,
  author       = {Philip S. Thomas and
                  Georgios Theocharous and
                  Mohammad Ghavamzadeh},
  title        = {High-Confidence Off-Policy Evaluation},
  booktitle    = {{AAAI}},
  pages        = {3000--3006},
  publisher    = {{AAAI} Press},
  year         = {2015}
}

@STRING{springer = {Springer}}

@STRING{acm = {ACM Press}}

@STRING{ieee = {IEEE}}

@STRING{lncs = {{Lecture Notes in Computer Science}}}

@STRING{cacm = {Communications of the ACM}}

@STRING{aaai = {{Proceedings of the AAAI Conference on Artificial Intelligence (AAAI)}}}

@STRING{aamas = {{International Conference on Autonomous Agents and Multiagent Systems (AAMAS)}}}

@STRING{acc = {{American Control Conference (ACC)}}}

@STRING{cav = {{Computer Aided Verification (CAV)}}}

@STRING{cdc = {CDC}}

@STRING{concur = {CONCUR}}

@STRING{ecml = {ECML}}

@STRING{formats = {FORMATS}}

@STRING{icml = {{International Conference on Machine Learning (ICML)}}}

@STRING{ijcai = {{International Joint Conference on Artificial Intelligence (IJCAI)}}}

@STRING{qest = {QEST}}

@STRING{tacas = {{Tools and Algorithms for the Construction and Analysis of Systems (TACAS)}}}

@STRING{nips = {{Advances in Neural Information Processing Systems (NIPS)}}}

@inproceedings{DBLP:conf/cav/DehnertJK017,
	author    = {Christian Dehnert and
	Sebastian Junges and
	Joost{-}Pieter Katoen and
	Matthias Volk},
	title     = {A Storm is Coming: {A} Modern Probabilistic Model Checker},
	booktitle = cav,
	series    = lncs,
	volume    = {10427},
	pages     = {592--600},
	publisher = {Springer},
	year      = {2017}
}

@inproceedings{KNP11,
	author    = {Marta Kwiatkowska and
	Gethin Norman and
	David Parker},
	title     = {\textsc{Prism} 4.0: Verification of Probabilistic Real-Time Systems},
	booktitle = cav,
	series    = lncs,
	publisher = springer,
	volume    = {6806},
	year      = {2011},
	pages     = {585--591}
}

@inproceedings{junges-et-al-tacas-2016,
	author    = {Sebastian Junges and
	Nils Jansen and
	Christian Dehnert and
	Ufuk Topcu and
	Joost{-}Pieter Katoen},
	title     = {Safety-Constrained Reinforcement Learning for {MDPs}},
	booktitle = tacas,
	series    = lncs,
	volume    = {9636},
	pages     = {130--146},
	publisher = springer,
	year      = {2016}
}

@inproceedings{draeger-et-al-tacas-2014,
	author    = {Klaus Dr{\"{a}}ger and
	Vojtech Forejt and
	Marta Z. Kwiatkowska and
	David Parker and
	Mateusz Ujma},
	title     = {Permissive Controller Synthesis for Probabilistic Systems},
	booktitle = tacas,
	series    = lncs,
	volume    = {8413},
	pages     = {531--546},
	publisher = springer,
	year      = {2014}
}

@inproceedings{DBLP:conf/cdc/WolffTM12,
  author    = {Eric M. Wolff and
               Ufuk Topcu and
               Richard M. Murray},
  title     = {Robust control of uncertain {M}arkov Decision Processes with temporal
               logic specifications},
  booktitle = {{CDC}},
  pages     = {3372--3379},
  publisher = {{IEEE}},
  year      = {2012}
}

@inproceedings{DBLP:conf/cav/PuggelliLSS13,
  author    = {Alberto Puggelli and
               Wenchao Li and
               Alberto L. Sangiovanni{-}Vincentelli and
               Sanjit A. Seshia},
  title     = {Polynomial-Time Verification of {PCTL} Properties of MDPs with Convex
               Uncertainties},
  booktitle = {{CAV}},
  series    = {Lecture Notes in Computer Science},
  volume    = {8044},
  pages     = {527--542},
  publisher = {Springer},
  year      = {2013}
}

@article{DBLP:journals/mor/WiesemannKR13,
  author    = {Wolfram Wiesemann and
               Daniel Kuhn and
               Ber{\c{c}} Rustem},
  title     = {Robust {M}arkov Decision Processes},
  journal   = {Math. Oper. Res.},
  volume    = {38},
  number    = {1},
  pages     = {153--183},
  year      = {2013}
}

@inproceedings{DBLP:conf/nips/SuilenS0022,
  author       = {Marnix Suilen and
                  Thiago D. Sim{\~{a}}o and
                  David Parker and
                  Nils Jansen},
  title        = {Robust Anytime Learning of {M}arkov Decision Processes},
  booktitle    = {NeurIPS},
  year         = {2022}
}

@article{DBLP:journals/mor/Iyengar05,
  author       = {Garud N. Iyengar},
  title        = {Robust Dynamic Programming},
  journal      = {Math. Oper. Res.},
  volume       = {30},
  number       = {2},
  pages        = {257--280},
  year         = {2005}
}

@article{DBLP:journals/ior/NilimG05,
  author       = {Arnab Nilim and
                  Laurent El Ghaoui},
  title        = {Robust Control of {M}arkov Decision Processes with Uncertain Transition
                  Matrices},
  journal      = {Oper. Res.},
  volume       = {53},
  number       = {5},
  pages        = {780--798},
  year         = {2005}
}

@inproceedings{DBLP:conf/atal/SimaoLC20,
  author       = {Thiago D. Sim{\~{a}}o and
                  Romain Laroche and
                  R{\'{e}}mi Tachet des Combes},
  title        = {Safe Policy Improvement with an Estimated Baseline Policy},
  booktitle    = {{AAMAS}},
  pages        = {1269--1277},
  publisher    = {International Foundation for Autonomous Agents and Multiagent Systems},
  year         = {2020}
}

@book{DBLP:books/wi/Puterman94,
  author       = {Martin L. Puterman},
  title        = {Markov Decision Processes: Discrete Stochastic Dynamic Programming},
  series       = {Wiley Series in Probability and Statistics},
  publisher    = {Wiley},
  year         = {1994}
}

@misc{court_probabilistic_2025,
    title = {Probabilistic {Shielding} for {Safe} {Reinforcement} {Learning}},
    url = {http://arxiv.org/abs/2503.07671},
    doi = {10.48550/arXiv.2503.07671},
    language = {en},
    urldate = {2025-09-26},
    publisher = {arXiv},
    author = {Court, Edwin Hamel-De le and Belardinelli, Francesco and Goodall, Alexander W.},
    month = mar,
    year = {2025},
    note = {arXiv:2503.07671 [stat]},
}
